\documentclass{statsoc}
\usepackage{graphicx}
\usepackage{natbib}
\usepackage{delarray}
\usepackage{hyperref}
\usepackage{rotating}
\usepackage{url}
\usepackage{colortbl,color}
\usepackage{anysize}
\usepackage{multirow,amssymb,amsmath, amsfonts}

\newtheorem{thm}{Theorem}
\newtheorem{lem}{Lemma}

\newtheorem{remark}{Remark}
\newtheorem{prop}{Proposition}
\def\boxit#1{\vbox{\hrule\hbox{\vrule\kern6pt\vbox{\kern6pt#1\kern6pt}\kern6pt\vrule}\hrule}}

\def\argmin{\mathop{\rm argmin}}
\def\tr{\mathop{\rm tr}}

\newcommand{\bB}{{\bf B}}
\newcommand{\bx}{{\bf x}}

\newcommand{\bX}{{\bf X}}

\newcommand{\bI}{{\bf I}}

\newcommand{\bw}{{\bf w}}

\newcommand{\bzero}{{\bf 0}}

\newcommand{\bXi}{\boldsymbol{\Xi}}

\newcommand{\bone}{{\bf 1}}
\newcommand{\bbeta}{\boldsymbol{\beta}}
\newcommand{\bmu}{\boldsymbol{\mu}}

\newcommand{\bxi}{\boldsymbol{\xi}}

\newcommand{\btheta}{\boldsymbol{\theta}}
\newcommand{\bSigma}{\boldsymbol{\Sigma}}

\newcommand{\bPsi}{\boldsymbol{\Psi}}

\newcommand{\bgamma}{\boldsymbol{\gamma}}

\newcommand{\bOmega}{\boldsymbol{\Omega}}

\newcommand{\bv}{\boldsymbol{v}}
\newcommand{\by}{\boldsymbol{y}}
\newcommand{\real}{I\kern-0.37emR}
\definecolor{annotxt}{rgb}{0,0,1}
\def\argmin{\mathop{\rm argmin}}

\newcommand{\bc}{{\mathcal{C}}}


\makeatletter
\newcommand{\rmnum}[1]{\romannumeral #1}
\newcommand{\Rmnum}[1]{\expandafter\@slowromancap\romannumeral #1@}
\makeatother

\title[ROAD to Classification]{A ROAD to Classification in High Dimensional Space}
\author[J. Fan, Y. Feng and X. Tong]{Jianqing Fan$^{[1]}$, Yang Feng$^{[2]}$ and Xin Tong$^{[1]}$}
\address{$^{[1]}$Department of Operations Research $\&$ Financial
Engineering, Princeton University, Princeton,
New Jersey 08544, U.S.A.}
\address{$^{[2]}$Department of
Statistics, Columbia University,
New York, NY 10027, U.S.A.
}
\email{jqfan@princeton.edu}
\email{yangfeng@stat.columbia.edu}
\email{xtong@princeton.edu}
\keywords{High Dimensional Classification, LDA, Regularized Optimal Affine Discriminant, Fisher Discriminant, Independence Rule.}

\begin{document}

\begin{abstract}
For high-dimensional classification, it is well known that
naively performing the  Fisher discriminant rule leads to poor results
due to diverging spectra and noise accumulation.
Therefore,  researchers proposed independence rules to circumvent
the diverging spectra, and sparse independence rules to mitigate the issue of noise
accumulation. However, in biological applications, there are often a group of correlated genes
responsible for clinical outcomes, and the use of the covariance information
can significantly reduce misclassification rates.  In theory the extent of such error rate reductions is unveiled by comparing the misclassification rates of
the Fisher discriminant rule and the independence rule.
To materialize the gain based on finite samples,
a Regularized Optimal Affine Discriminant (ROAD) is proposed.  ROAD
selects an increasing number of  features as the regularization relaxes.
Further benefits can be achieved when a screening method
is employed to narrow the feature pool before hitting the ROAD.
An efficient Constrained Coordinate Descent algorithm (CCD)
is also developed to solve the associated optimization problems.  Sampling properties of oracle type are established.
Simulation studies and real data analysis
support our theoretical results and demonstrate the advantages
of the new classification procedure under a variety of correlation structures. A delicate result on continuous piecewise linear solution path for the ROAD optimization problem at the population level justifies the linear interpolation of the CCD algorithm.
\end{abstract}
\keywords{classification; covariance; regularization}

\section{Introduction}
Technological innovations have had deep impact on society and on various areas of scientific research. High-throughput data from microarray and proteomics technologies are frequently used in many contemporary statistical studies. In the case of microarray data, the dimensionality is frequently in thousands or beyond, while the sample size is typically in the order of tens. The large-$p$-small-$n$ scenario poses challenges for the classification problems.  We refer to  \citet{FanLv2010} for an overview of statistical challenges associated with high dimensionality.

When the feature space dimension $p$ is very high compared to the sample size $n$, the Fisher discriminant rule performs poorly due to diverging spectra as demonstrated by \citet{BickelLevina-04}. These authors showed that the independence rule in which the covariance structure is ignored performs better than the naive Fisher rule (NFR) in the high dimensional setting.  \citet{FanFan-08} demonstrated further that even for the independence rules, a procedure using all the features can be as poor as
random guessing due to noise accumulation in estimating population centroids
in high-dimensional feature space. As a result, \citet{FanFan-08} proposed the Features Annealed Independence Rule (FAIR) that selects a subset of important features for classification. \citet{Dudoit-2002} reported that for microarray data, ignoring correlations between genes leads to better classification results. \citet{Tibshirani-02} proposed the Nearest Shrunken Centroid (NSC) which likewise employs the working independence structure. Similar problems are also studied in the machine learning community such as \citet{Domingos-1997} and \citet{Lewis98}.

In microarray studies, correlation among different genes is an essential characteristic of the data and usually not negligible. Other examples include proteomics, and metabolomics data where correlation among biomarkers is commonplace. More details can be found in \citet{Ackermann-2009}. Intuitively, the independence assumption among genes leads to loss of critical information and hence is suboptimal. We believe that in many cases, the crucial point is not whether to consider correlations, but how we can incorporate the covariance structure into the analysis with a bullet proof vest against diverging spectra and significant noise accumulation effect.

The setup of the objective classification problem is now introduced. We assume in the following that the variability of data under consideration can be  described reasonably well by the means and variances.  To be more precise, suppose that random variables representing two classes $\bc_1$ and $\bc_2$ follow $p$-variate normal distributions: $\bX|Y=1\sim\mathcal{N}_p(\bmu_1,\bSigma)$ and
$\bX|Y=2\sim\mathcal{N}_p(\bmu_2,\bSigma)$
respectively. Moreover, assume $\mathbb{P}(Y=1)=1/2$. This Gaussian discriminant analysis setup is known for its good performance despite its rigid model structure.
For any linear discriminant rule
\begin{equation} \label{eq1}
    \delta_{\bw}(\bX)=\mathbb{I}\{\bw^T(\bX-\bmu_a)>0\},
\end{equation}
where $\bmu_a=(\bmu_2+\bmu_1)/2$, and $\mathbb{I}$ denotes the indicator function with  value $1$ corresponds to assigning $\bX$ to class $\bc_2$ and $0$ class $\bc_1$, the misclassification rate of the (pseudo) classifier $\delta_{\bw}$ is
\begin{equation}  \label{eq2}
  W(\delta_{\bw})=\frac{1}{2}P_2(\delta_{\bw}(\bX)=0)+\frac{1}{2}P_1(\delta_{\bw}(\bX)=1)=1-\Phi(\bw^T\bmu_d/(\bw^T\bSigma\bw)^{1/2}),
\end{equation}
where $\bmu_d=(\bmu_2-\bmu_1)/2$, and $P_i$ is the conditional distribution of $\bX$ given its class label $i$. We will focus on such linear classifier $\delta_{\bw}(\cdot)$, and the mission is to find a good data projection direction $\bw$.
Note that the Fisher discriminant
\begin{equation} \label{eq3}
   \delta_F (\bX) = \mathbb{I}\{(\bSigma^{-1}\bmu_d)^T(\bX-\bmu_a)>0\}
\end{equation}
is the \emph{Bayes rule}. 
There are two fundamental difficulties in applying the Fisher discriminant whose missclassification rate is
\begin{equation} \label{eq4}
1 - \Phi \left ( (\bmu_d^T\bSigma^{-1}\bmu_d )^{1/2} \right ).
\end{equation}
The first difficulty arises from the noise accumulation effect in estimating the population centroids \citep{FanFan-08} when $p$ is large.  The second challenge is more severe:  estimating the inverse of covariance matrix $\bSigma$ when $p > n$ \citep{BickelLevina-04}.  As a result, much previous researches focus on the independence rules, which act as if $\bSigma$ is diagonal.  However, correlation matters!

To illustrate this point, consider a case when $p=2$. These two features can be selected from the original thousands of features, and we can estimate the correlation between two variables with reasonable accuracy.  Let
\[ \bSigma= \left( \begin{array}{ccc}
1 & \rho \\
\rho & 1 \end{array} \right),\]
where $\rho\in[0,1)$ and $\bmu_d=(\mu_1, \mu_2)^{T}$. Without loss of generality, assume $|\mu_1| \geq |\mu_2| >0$.  The misclassification rate of Fisher discriminant depends on
\begin{equation} \label{eq4a}
   \Delta_p (\rho) =\bmu_d^T\bSigma^{-1}\bmu_d = \frac{1}{1-\rho^2}(\mu_1^2+\mu_2^2-2\rho\mu_1\mu_2).
\end{equation}
Note that $$\Delta_p'(\rho)>0 \Leftrightarrow \mu_1\mu_2\rho^2-(\mu_1^2+\mu_2^2)\rho+\mu_1\mu_2<0.$$
Therefore, when $\mu_1\mu_2<0$, $\Delta'_p(\rho)>0$ for all $\rho\in[0,1)$.
On the other hand, when $\mu_1\mu_2>0$, $\Delta_p(\rho)$ decreases on $\rho\in (0, \frac{\mu_2}{\mu_1})$, and increases on $(\frac{\mu_2}{\mu_1},1)$.  Notice that when $\rho\rightarrow 1$, $\Delta_p\rightarrow \infty$ regardless of signs for $\mu_1 \mu_2$,  which in turn leads to vanishing classification error.
On the other hand, if we use independence rule (also called naive Bayes rule), the optimal misclassification rate
\begin{equation}  \label{eq5}
1 - \Phi \left ( \frac{\|\bmu_d\|_2^2}{(\bmu_d^T\bSigma\bmu_d)^{1/2}} \right )
\end{equation}
depends on $\Gamma(\rho) = \|\bmu_d\|_2^4 / \bmu_d^T\bSigma\bmu_d$, which is monotonically decreasing for $\rho\in[0,1)$, with the limit $(\mu_1^2+\mu_2^2)^2/(\mu_1+\mu_2)^4$ that is smaller than unity when $\mu_1$ and $\mu_2$ have the same sign.
Hence, the optimal classification error using the independence rule actually increases as correlation among features increases.

The above simple example shows that by incorporating correlation information, the gain in terms of classification error can be substantial.  Elaboration on this point in more realistic scenarios is provided in Section 2.  Now it seems wise to use at least a part of covariance structure to improve the performance of a classifier.
So there is a need to estimate the covariance matrix $\bSigma$. Without structural assumptions on $\bSigma$, the pooled sample covariance $\hat{\bSigma}$ is one natural estimate. But for $p > n$, it is not considered as a good estimate of $\bSigma$ in general.  We are lucky here because our mission is not constructing a good estimate of the covariance matrix, but finding a good direction $\bw$ that leads to a good classifier.  To mimic the optimal data projection direction $\bSigma^{-1}\bmu_d$,  we do not adopt a direct plug-in approach, simply because it is unlikely that a product is a good estimate when at least one of its components is not.  Instead, we find the data projection direction $\bw$ by directly minimizing the classification error subject to a capacity constraint on $\bw$. From a broad spectrum of simulated and real data analysis, we are convinced that this approach leads to a robust and efficient sparse linear classifier.

Admittedly, our work is far from the first to use covariance for classification; support vector machines \citep{Vapnik-1995}, for example, implicitly utilize covariance between covariates. Another notable work is ``shrunken
centroids regularized discriminant analysis'' (SCRDA) \citep{Guo-2005}, which calls for a version of regularized sample covariance matrix $\hat{\bSigma}_{\text{reg}}$, and soft-thresholds on $\hat{\bSigma}^{-1}_{\text{reg}}\hat{\bx}_i$. \cite{ShaoWang11} consider a sparse linear discriminant analysis, assuming the sparsity on both the covariance matrix and the mean difference vector so that they can be regularized.  They show that such a regularized estimator is asymptotically optimal under some conditions.
However, to the best of our knowledge, this work is the first to select features by directly optimizing the misclassification rates, to explicitly use un-regularized sample covariance information, and to establish the oracle inequality and risk approximation theory.


There is a huge literature on high dimensional classification. 
Examples include principal component analysis in \citet{BairHast-2006} and \citet{ZouHastieTibs-2006}, partial least squares in \citet{Nguyen-2002}, \citet{HuangPan-2003} and \citet{Boulesteix-2004}, and sliced inverse regression in \citet{Li-1991-sliced-inverse-regression} and \citet{Antoniadis-2003}.

The rest of our paper is organized as follows.  Section \ref{sec::special-case} provides some insights on the performances of naive Bayes, Fisher discriminant and restricted Fisher discriminants. In Section \ref{sec::ROAD}, we propose the Regularized Optimal Affine Discriminant (ROAD) and variants of ROAD. An efficient algorithm Constrained Coordinate Descent (CCD) is constructed in Section \ref{sec::ccd}. Main risk approximation results and continuous piecewise linear property of the solution path are established in Section \ref{sec::asymptoticproperty}. We conduct simulation and empirical studies in Section \ref{sec::numerical-study}.
 A  discussion is given in Section \ref{sec::summary}, and all proofs are relegated to the appendix.

\section{Naive Bayes and Fisher Discriminant}\label{sec::special-case}

To compare the naive Bayes and Fisher discriminant at the population level, we assume without loss of generality that variables have been marginally standardized so that $\bSigma$ is a correlation matrix.  Recall that the naive Bayes discriminant has error rate (\ref{eq5}) and the Fisher discriminant has error rate (\ref{eq4}).  Let $\Gamma_p = \|\bmu_d\|_2^4 /\bmu_d^T\bSigma\bmu_d$ and
$\Delta_p = \bmu_d^T\bSigma^{-1} \bmu_d$. Denote by $\{\lambda_i\}_{i=1}^p$ the eigenvalues and $\{\bxi_i\}_{i=1}^p$ eigenvectors of the matrix $\bSigma$.  Decompose
\begin{equation} \label{eq6}
    \bmu_d=a_1\bxi_1+\cdots + a_p\bxi_p,
\end{equation}
where $\{a_i\}_{i=1}^p$ are the coefficients of $\bmu_d$ in this new orthonormal basis $\{\bxi_i\}_{i=1}^p$.  Using the decomposition (\ref{eq6}), we have
\begin{equation}  \label{eq7}
\Delta_p = \sum_{j=1}^p  a_j^2/\lambda_j, \qquad \Gamma_p = \Bigl (\sum_{j=1}^p a_j^2 \Bigr )^2  /\sum_{j=1}^p \lambda_j a_j^2 .
\end{equation}
The relative efficiency of Fisher discriminant over naive Bayes is characterized by $\Delta_p/\Gamma_p$.  By the Cauchy-Schwartz inequality,
$$
\Delta_p / \Gamma_p \geq 1.
$$
The naive Bayes method performs as well as the Fisher discriminant only when $\bmu_d$ is an eigenvector of $\bSigma$.

In general, $\Delta_p/\Gamma_p$ can be much larger than unity.  Since $\bSigma$ is the correlation matrix,  $\sum_{j=1}^p \lambda_j = \tr(\bSigma) = p$. If $\bmu_d$ is equally loaded on $\bxi_j$, then the ratio
\begin{equation}  \label{eq8}
    \Delta_p / \Gamma_p = p^{-2} \sum_{j=1}^p \lambda_j  \sum_{j=1}^p \lambda_j^{-1}
    = p^{-1}  \sum_{j=1}^p \lambda_j^{-1}.
\end{equation}
More generally, if $\{a_j\}_{j=1}^p$ are realizations from a distribution with the second moment $\sigma^2$, then by the law of large numbers,
$$
 \sum_{j=1}^p  a_j^2\lambda_j^{-1} \approx \sigma^2  \sum_{j=1}^p 1/\lambda_j, \quad
p^{-1} \sum_{j=1}^p  a_j^2 \approx \sigma^2, \quad  \sum_{j=1}^p  \lambda_j a_j^2 \approx \sigma^2  \sum_{j=1}^p \lambda_j.
$$
Hence, (\ref{eq8}) holds approximately in this case.  In other words, the right hand side of (\ref{eq8}) is approximately the relative efficiency of the Fisher discriminant over the naive Bayes. Now suppose further that half of the eigenvalues of $\bSigma$ are $c$ and the other half are $2-c$. Then, the right hand side of (\ref{eq8}) is
$(c^{-1} + (2-c)^{-1})/2$.  For example when the condition number is $10$, this ratio is about $3$. A high ratio translates into a large difference in error rates:  $1 - \Phi(\Gamma_p^{1/2})$ for independence rule is much larger than $1 - \Phi(3\Gamma_p^{1/2})$ for Fisher discriminant.  For example, when $\Gamma_p^{1/2} = 0.5$, we have 30.9\% and 6.7\% error rates respectively for the naive Bayes and Fisher discriminant.

\begin{figure}

\begin{center}
\includegraphics[scale=0.6]{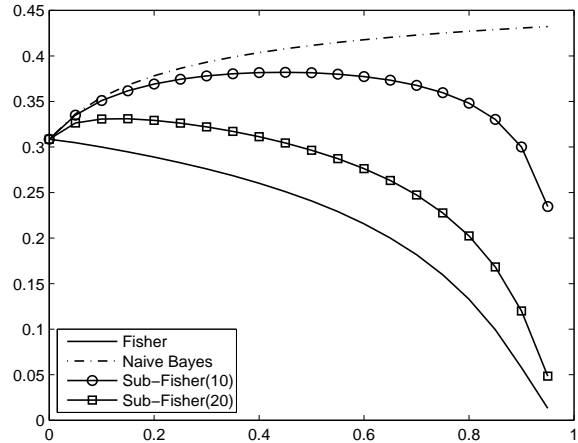}
\end{center}
\caption{Misclassification rates of Fisher discriminant, naive Bayes and restricted Fisher rules  (10 and  20 features, respectively) against $\rho$.}\label{fig1}
\end{figure}

To put the above arguments under a visual inspection, consider a case in which $p=1000$, $\bmu_d=(\bmu_s^T, \mbox{\bf 0}^T)^{T}$ with $\bmu_s=(1, 1, 1, 1, 1, 2, 2, 2, 2,2)^T$ and $\bSigma$ equals the equi-correlation matrix with pairwise correlation $\rho$.  The vector $\bmu_d$ simulates the case in which  10 genes out of 1000 express mean differences.
Figure~\ref{fig1} depicts the theoretical error rates of the Fisher discriminant  and the naive Bayes rule  as functions of $\rho$.

It is not surprising that the Fisher discriminant rule performs significantly better than the naive Bayes as $\rho$ deviates away from zero.  The error rate of the naive Bayes actually increases with $\rho$, whereas the error rate of the Fisher discriminant tends to zero as $\rho$ approaches 1. This phenomenon is the same as what was shown analytically through the toy example in Section 1.  To mimic Fisher discriminant by a plug-in estimator, we  need to estimate $\bSigma^{-1}\bmu_d$ with reasonable accuracy. This mission is difficult if not impossible.  On the other hand, imitating a weaker oracle is more manageable.  For example, when the samples are of reasonable size, we can select the 10 variables with differences in means by applying a two-sample $t$-test.  Restricting to the best linear classifiers based on these $s=10$ variables, we have the optimal error rate
$$
  1-\Phi((\bmu_{s}^{T}\bSigma_{s}^{-1}\bmu_s)^{1/2}),
$$
where the classification rule is $\delta_{\bw^R}$ and $\bw^R=((\bSigma_s^{-1}\bmu_s)^T, \mbox{\bf 0}^T)^T$. The performance of this oracle classifier is depicted by the sub-Fisher (10 features) in  Figure~\ref{fig1}.   It performs much better than the naive Bayes method.
One can also employ the naive Bayes rule to the restricted feature space, but this method has exactly the same performance as the naive Bayes method in the whole space.  Thus, the restricted Fisher discriminant outperforms both the naive Bayes method with restricted features and the naive Bayes method using all features.

Mimicking the performance of the restricted Fisher discriminant is feasible.  Instead of estimating a $1000 \times 1000$ covariance matrix, we only need to gauge a $10 \times 10$ submatrix.  However, this restricted Fisher rule is not powerful enough, as shown in Figure 1.  We can improve its performance by including 10 most correlated variables to each of those selected features to further account
for the correlation effect, giving rise to a 20-dimensional feature space.  Since the variables are equally correlated in this example, we are free to choose any 10 variables among the other 990.
The performance of such an enlarged restricted Fisher discriminant is represented by sub-Fisher (20 features) in Figure~\ref{fig1}.  It performs closely to the Fisher discriminant which uses the whole feature space, and it is feasible to implement with finite samples.

\section{Regularized Optimal Affine Discriminant}\label{sec::ROAD}

The misclassification rate of Fisher discriminant is $1-\Phi(\Delta_p^{1/2})$, where $\Delta_p=\bmu_d^T\bSigma^{-1}\bmu_d$. However, for high dimensional data, it is impossible to achieve such a performance empirically. Among other reasons, the estimated covariance matrix $\hat\bSigma$ is ill-conditioned or not invertible. One solution is to focus only on the $s(<<p)$ most important features for classification.  Ideally, the best $s$ features should be the ones with the largest $\Delta_s$ among all $p\choose s$ possibilities, where $\Delta_s$ is the counterpart of $\Delta_p$
when only $s$ variables are considered.   Naive search for the best subset of size $s$ is NP-hard. Thus, we develop a regularized method to circumvent these two problems.

\subsection{ROAD}


Recall that by \eqref{eq2}, minimizing the classification error $W(\delta_{\bw})$ is the same as maximizing $\bw^T\bmu_d/(\bw^T\bSigma\bw)^{1/2}$, which is equivalent to minimizing $\bw^T\bSigma\bw$ subject to $\bw^T\bmu_d=1$.
We would like to add a penalty function for capacity control.  There are many ways to do regularization; for the literature on penalized methods, refer to LASSO \citep{Tibs:regr:1996},  SCAD \citep{FanLi2001},  Elastic net \citep{ZouHastie05},  MCP
\citep{Zhang09} and related methods \citep{Zou06,ZouLi08}. As our primary interest is classification error (the risk of the procedure),  an $\mbox{L}_1$ constraint $\|\bw\|_1\leq c$ is added for regularization, so the problem can be recast as
\begin{equation}\label{eq9}
\bw_{c}=\argmin_{\|\bw\|_1\leq c, \bw^T\bmu_d=1}\bw^T\bSigma\bw.
\end{equation}
We name the classifier $\delta_{\bw_c}(\cdot)$ the Regularized Optimal Affine Discriminant(ROAD).  The existence of a feasible solution in \eqref{eq9} dictates
\begin{equation} \label{eq10}
    c\geq 1/{\max_{1\leq i\leq p}|\mu_{d,i}|}.
\end{equation}

When $c$ is small, we obtain a sparse solution and achieve feature selection using covariance information. When $c\geq \|\bSigma^{-1}\bmu_d\|_1 / \bmu_d^T\bSigma^{-1}\bmu_d$, the $L_1$ constraint is no longer binding and $\delta_{\bw_c}$ reduces to the Fisher discriminant, which can be denoted by $\delta_{\bw_{\infty}}$ ($=\delta_{F}$). Therefore we have provided a family of linear discriminants, indexed by $c$, using from only one feature to all features. In some applications such as portfolio selection, the choice of $c$ reflects the investor's tolerance upper bound on gross exposure.  In other applications, when the user does not have a such a preference,  the choice of $c$ can be data-driven. To accommodate both application scenarios, we propose a coordinate descent algorithm (Section \ref{sec::ccd}) to implement our ROAD proposal.

\subsection{Variants of ROAD}

At the sample level, NSC \citep{Tibshirani-02} and FAIR \citep{FanFan-08} both use shrunken versions of standardized mean difference to find the $s$ features.  In the same spirit, we consider
the following Diagonal Regularized Optimal Affine Discriminant(D-ROAD) $\delta_{\bw_c^I}$, where
\begin{equation}\label{eq11}
\bw^I_{c}=\argmin_{\|\bw\|_1\leq c, \bw^T\bmu_d=1}\bw^T\mbox{diag}(\bSigma)\bw.
\end{equation}
The D-ROAD will be compared with NSC \citep{Tibshirani-02} and FAIR \citep{FanFan-08}  in the simulation studies, and all these independence based rules will be compared with ROAD and its two variants defined below.


A  screening-based variant  (to be proposed) of ROAD aims at mimicking the performance of sub-Fisher (10 features) in Figure 1. A fast way to select features is the independence screening, which uses the marginal information such as the two-sample $t$-test. We can also enlarge the selected feature subspace by incorporating the features which are most correlated to what have been chosen. This additional variant of ROAD tracks the performance of sub-Fisher (20 features) in Figure 1.  We will refer to the two variants of ROAD as S-ROAD1 and S-ROAD2. More description of these procedures, along with their theoretical properties and numerical investigations, will be detailed in Sections 5 and 6.

A hint of the rationale behind including correlated features that do not show a
difference in means between the two classes, is revealed through the two-feature example in the introduction.  Suppose $\mu_2 = 0$.  Then, by (\ref{eq4a}), the power of the discriminant using two features is $1-\Phi(\Delta_2^{1/2})$ where  $\Delta_2 = \mu_1^2 / (1-\rho^2)$, whereas with the first feature alone the misclassification rate is $1-\Phi(\Delta_1^{1/2})$ where $\Delta_1 = \mu_1^2$.  Therefore when the correlation $|\rho|$ is large, using two correlated features is far more powerful than employing only one feature, even though the second feature has no marginal discrimination power. More intuition is granted by this observation:  at the population level, the best s features are not necessarily those with largest standardized mean differences. In other words, with the two class Gaussian model in mind, when $\bSigma$ is the correlation matrix, the most powerful s features for classification are not necessarily the coordinates of $\bmu_d$ with largest absolute values. This is illustrated by the next stylized example.

Let $\bX|Y=0\sim\mathcal{N}(\bmu_1, \bSigma)$ and $\bX|Y=1\sim\mathcal{N}(\bmu_2, \bSigma)$, where $\bmu_1=(0,0,0)^T$,  $\bmu_2=(4,0.5,1)^T$,  and
\[ \bSigma = \left( \begin{array}{ccc}
1 & -0.25 & 0 \\
-0.25 & 1 & 0 \\
0 & 0 & 1 \end{array} \right).\]
Suppose the objective is to choose 2 out of 3 variables for classification.   If we rank features by marginal information, for example by the absolute value of standardized mean differences, then we would choose the 1st and 3rd features.  On the other hand, denote $\bmu_{d, ij}$ the mean difference vector for features $i$ and $j$, $\bSigma_{ij}$ the covariance matrix of features $i$ and $j$, then the classification power using features $i$ and $j$ depends on $\Gamma_{ij}=\bmu_{d,ij}^T\bSigma_{ij}^{-1}\bmu_{d, ij}$. Simple calculation leads to
$$
\Gamma_{12}=18.4>17=\Gamma_{13}\,.
$$
Hence the most powerful two features for classification are not the 1st and 3rd.

\section{Constrained Coordinate Descent}\label{sec::ccd}
With a Lagrangian argument, we reformulate problem (\ref{eq9}) as
\begin{eqnarray}\label{eq12}
{\bar\bw}_{\lambda} = \argmin_{\bw^{T}\bmu_d = 1}\frac{1}{2}\bw^{T}\bSigma \bw + \lambda\|\bw\|_1.
\end{eqnarray}
In this section, we  propose a Constrained Coordinate Descent (CCD) algorithm  that is tailored  for solving our minimization problem with linear constraints.  Optimization \eqref{eq12} is a constrained quadratic programming problem and can be solved by existing softwares such as MOSEK.  Although these softwares are well regarded in practice, they are slow for our application.   The structure
of \eqref{eq12} could be exploited in order to obtain a more efficient algorithm. In line with the
LARS algorithm, we will exploit the fact that the solution path has a piecewise-linear
property.

In the compressed sensing literature, it is common to replace an affine constraint by a quadratic penalty.  We borrow this idea and consider the following approximation to \eqref{eq12}:
\begin{eqnarray}\label{eq13}
\tilde{\bw}_{\lambda,\gamma} = \argmin \frac{1}{2}\bw^{T}\bSigma \bw + \lambda\|\bw\|_1 + \frac{1}{2}\gamma (\bw^{T}\bmu_d - 1)^2\,.
\end{eqnarray}
In practice, we replace $\bSigma$ by the pooled sample covariance $\hat{\bSigma}$ and $\bmu$ by the sample mean difference vector $\hat{\bmu}_d$.  By Theorem 6.7 in \cite{Ruszczynski06}, we have
$$
 \tilde{\bw}_{\lambda,\gamma}\rightarrow \bar{\bw}_{\lambda}\mbox{ when } \gamma \rightarrow \infty.
$$
Note that we do not have to enforce the affine constraint strictly,  because it only serves to normalize our problem. In the optimization problem (\ref{eq13}), when $\lambda = 0$, the solution $\tilde{\bw}_{0, \gamma}$ is always in the direction of $\bSigma^{-1} \bmu_d$, the Fisher discriminant, regardless of the value of $\gamma$. In addition, this observation is confirmed in the data analysis (Section 6.2) by the insensitivity of choice for $\gamma$.  Therefore we hold $\gamma$ as a constant in practice.

We solve \eqref{eq13} by coordinate descent.  Non-gradient algorithms seem to be less popular for convex optimization.  For instance, the popular textbook \emph{Convex Optimization} by \cite{Boyd04} does not even have a section on these methods.  
Coordinate descent method is an algorithm, in which the $p$ search directions are just unit vectors $e_1, \cdots, e_p$, where $e_i$ denotes the $i$th element in the standard basis of $\mathbb{R}^p$.  These unit vectors are used as search directions in each search cycle until some convergence criterion is met.

What makes the coordinate descent algorithm particularly attractive for \eqref{eq13} is that there is an explicit formula for each coordinate update. For a given $\gamma$, fix  $\tau$ and $K$,  then do the optimization on a grid (of log-scale) of $\lambda$ values: $\tau\lambda_{\max}=\lambda_K<\lambda_{K-1}<\cdots<\lambda_1=\lambda_{\max}$. The $\lambda_{\max}$ is the minimum $\lambda$ value such that no variables enter the model; this is analogous to the minimum requirement on $c$ in (\ref{eq10}). In our implementation, we take $\tau=0.001$ and $K=100$.
The problem is solved backwards from $\lambda_{\max}$. When $\lambda=\lambda_{i+1}$, we use the solution from $\lambda=\lambda_{i}$ as the initial value.  This kind of ``warm start" is very effective in improving computational efficiency.

Consider a coordinate descent step to solve (\ref{eq13}). Without loss of generality, suppose that $\tilde{w}_j$ for all $j\geq 2$ are given, and we need to optimize with respect to $w_1$.  The objective function now becomes
\begin{align*}
g(w_1)&= \frac{1}{2} \begin{pmatrix}
  w_1^T&\tilde \bw_2^T
\end{pmatrix}\begin{pmatrix}
\Sigma_{11} & \bSigma_{12} \\
\bSigma_{21} & \bSigma_{22}  \\
\end{pmatrix}\begin{pmatrix}
  w_1\\ \tilde \bw_2
\end{pmatrix}+\lambda |w_1| + \lambda |\tilde \bw_2|_1+ \frac{1}{2}\gamma (\bw^{T}\bmu_d - 1)^2.
\end{align*}
When $w_1\neq 0$, we have
\begin{align*}
g'(w_1)&=  \Sigma_{11} w_1+ \bSigma_{12} \tilde \bw_2 + \lambda \mbox{ sign}(w_1) + \gamma (\bw^{T}\bmu_d - 1) \mu_{d1}  \\
&=  (\Sigma_{11}+\gamma \mu_{d1}^2) w_1+ (\bSigma_{12} + \gamma \mu_{d1} \bmu_{d2}^T) \tilde \bw_2 + \lambda \mbox{ sign}(w_1) -\gamma \mu_{d1}.
\end{align*}
By simple calculation \citep{Dono:John:idea:1994}, the coordinate-wise update has the form
$$
\tilde w_1= \frac{S\left(\gamma \mu_{d1} -(\bSigma_{12}+\gamma \mu_{d1}\bmu_{d2}^T)\tilde\bw_2, \lambda \right)}{\Sigma_{11}+\gamma \mu_{d1}^2},
$$
where $S(z, \lambda) = \mbox{sign}(z)(|z|-\lambda)^+ $ is the soft-thresholding operator.

Now, we consider the convergence property of the coordinate descent algorithm. Here, although the objective function is not strictly convex, it is strictly convex in each of the coordinates.

To show $g(w_1)$ is strictly convex in $w_1$, we decompose it as follows: $$g(w_1)=g_1(w_1)+g_2(w_1),$$ where $g_2(w_1)=\lambda |w_1|$ and
\begin{align*}
g_1(w_1)&= \frac{1}{2} \begin{pmatrix}
  w_1^T&\tilde \bw_2^T
\end{pmatrix}\begin{pmatrix}
\Sigma_{11} & \bSigma_{12} \\
\bSigma_{21} & \bSigma_{22}  \\
\end{pmatrix}\begin{pmatrix}
  w_1\\ \tilde \bw_2
\end{pmatrix} + \lambda |\tilde \bw_2|_1+ \frac{1}{2}\gamma (\bw^{T}\bmu_d - 1)^2\,.
\end{align*}
Note that $g_1(w_1)$ is a quadratic function of $w_1$ and $g_1''(w_1) = \Sigma_{11}+\gamma \mu_{d1}^2>0$ for all $w_1\in \mathbb{R}$. Therefore, the function $g_1(\cdot)$ is strictly convex on $\mathbb{R}$.  Also, it is clear that $g_2$ is convex on $\mathbb{R}$.  Therefore $g = g_1 + g_2$ is a strictly convex function on $\mathbb{R}$.

Combining the coordinate-wise strict convexity with the fact that  the non-differentiable part of the objective function is separable, Theorem 5.1 of \citet{Tseng01} guarantees that coordinate descent algorithms converge to coordinate-wise minima.  Moreover, since all directional derivatives exist, every coordinate-wise minimum is also a local minimum.  A similar study on the convergence of the coordinate descent algorithm can be found in \citet{BrehenyHuang2011}.

%

In each coordinate update, the computational complexity is $\mathcal{O}(p)$. A complete cycle through all $p$ variables costs $\mathcal{O}(p^2)$ operations. From our experience, CCD converges quickly after a few cycles if ``warm start" is used for the initial solution. Let $C$ denote the average number of cycles until convergence for each $\lambda$.  Then our algorithm CCD enjoys computational complexity  $\mathcal{O}(CKp^2)$.  
The D-ROAD can be similarly implemented by replacing the covariance matrix with its diagonal.

\section{Asymptotic Property}\label{sec::asymptoticproperty}
\subsection{Risk Approximation}


Let $\hat\bw_{c}$ be a sample version of $\bw_c$ in (\ref{eq9}),
\begin{equation}\label{eq14}
\hat\bw_{c}\in\argmin_{\|\bw\|_1\leq c, \bw^T\hat\bmu_d=1}\bw^T\hat \bSigma\bw.
\end{equation}
The fact that $\hat\bSigma$ is only positive semi-definite leads to potential non-uniqueness of $\hat\bw_c$.
Now, we have three different classifiers: $\delta_{\bw_\infty}=\mathbb{I}\{{\bw_\infty^T}(\bX-\bmu_a)>0\}$, $\delta_{\bw_c}=\mathbb{I}\{\bw_{c}^T(\bX-\bmu_a)>0\}$ and $\hat\delta_{\bw_c}=\mathbb{I}\{\hat\bw_{c}^T(\bX-\hat\bmu_a)>0\}$. The first two are oracle classifiers, requiring the knowledge of unknown parameters $\bmu_1$, $\bmu_2$ and $\bSigma$, while the third one is the feasible classifier, ROAD, based on the sample.  Their classification errors are given by (\ref{eq2}).  Explicitly, the error rates are respectively $W(\delta_{\bw_\infty})$ [see (\ref{eq4})], $W(\delta_{\bw_c})$, and $W(\hat{\delta}_{\bw_c})$.  By (\ref{eq2}), an obvious estimator of the misclassification rate of $\hat{\delta}_{\bw_c}$ is
\begin{equation} \label{eq15}
W_n(\hat\delta_{\bw_c})=1-\Phi\left(
\frac{\hat\bw_c^T\hat\bmu_d}{(\hat\bw_c^T\hat\bSigma\hat\bw_c)^{1/2}} \right).
\end{equation}
Two questions arise naturally:
\begin{enumerate}
   \item how close is $W(\hat{\delta}_{\bw_c})$, the misclassification error of
     $\hat{\delta}_{\bw_c}$, to that of its oracle $W(\delta_{\bw_c})$?
   \item does $W_n(\hat\delta_{\bw_c})$ estimate $W(\hat{\delta}_{\bw_c})$ well?
\end{enumerate}
Theorem \ref{thm1} addresses these two questions.  We introduce an intermediate optimization problem for convenience:
\[
\bw^{(1)}_c=\argmin_{\|\bw\|_1\leq c, \bw^T\hat\bmu_d=1}\bw^T\bSigma\bw.
\]

\begin{thm}\label{thm1}
Let $s_c=\|\bw_c\|_0$, $s_c^{(1)}=\|\bw_c^{(1)}\|_0$, and $\hat{s}_c=\|\hat{\bw}_c\|_0$. Assume that $\lambda_{\min}(\bSigma)\geq\sigma_0^2>0$, $\|\hat\bSigma-\bSigma\|_{\infty}=O_p(a_n)$ and $\|\hat\bmu_d-\bmu_d\|_{\infty}=O_p(a_n)$ for a given sequence $a_n \to 0$.  Then, we have
$$
   W(\hat\delta_{\bw_c})-W(\delta_{\bw_c})=O_p(d_n)\,,
$$
and
$$
    W_n(\hat\delta_{\bw_c})-W(\hat{\delta}_{\bw_c})=O_p(b_n)\,,
$$
where $b_n=\left(c^2\vee s_c \vee s_c^{(1)}\right)a_n$ and $d_n= b_n\vee (\hat{s}_ca_n)$.
\end{thm}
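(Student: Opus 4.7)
The plan is to use $W(\delta_\bw) = 1 - \Phi(R(\bw))$ with the classification margin $R(\bw) = \bw^T\bmu_d/(\bw^T\bSigma\bw)^{1/2}$, and reduce the theorem to perturbation bounds on $R$ via the Lipschitz continuity of $\Phi$. The three competing vectors $\bw_c$, $\bw_c^{(1)}$, $\hat\bw_c$ are bridged by feasibility arguments: $\bw_c^{(1)}$ and $\hat\bw_c$ share the sample constraint $\bw^T\hat\bmu_d = 1$, while $\bw_c^{(1)}$ and $\bw_c$ share the population objective $\bw^T\bSigma\bw$. This lets one transfer optimality across the three problems at controllable cost.

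The workhorse perturbation bounds are, for any $\bw$,
\[
|\bw^T(\hat\bmu_d-\bmu_d)|\leq \|\bw\|_1\,\|\hat\bmu_d-\bmu_d\|_\infty,\qquad |\bw^T(\hat\bSigma-\bSigma)\bw|\leq \|\bw\|_1^{2}\,\|\hat\bSigma-\bSigma\|_\infty.
\]
On the feasible set $\|\bw\|_1 \leq c$ the first is $O_p(c a_n)$ and the second $O_p(c^2 a_n)$. Cauchy--Schwarz gives the sharper $\|\bw\|_1^2 \leq \|\bw\|_0 \|\bw\|_2^2$, and combined with $\lambda_{\min}(\bSigma)\geq \sigma_0^2$, which yields $\|\bw\|_2^2 \leq \bw^T\bSigma\bw/\sigma_0^2$ (bounded at any anchor vector whose objective value is $O(1)$), one also gets the sparsity version $O_p(\|\bw\|_0 a_n)$. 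Applied at $\bw_c$, $\bw_c^{(1)}$, $\hat\bw_c$ these twin bounds generate exactly the terms $c^2$, $s_c$, $s_c^{(1)}$, $\hat s_c$ that appear in $b_n$ and $d_n$.

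For claim (ii), I would write
\[
R(\hat\bw_c) - \frac{1}{(\hat\bw_c^T\hat\bSigma\hat\bw_c)^{1/2}} = \frac{\hat\bw_c^T\bmu_d}{(\hat\bw_c^T\bSigma\hat\bw_c)^{1/2}} - \frac{\hat\bw_c^T\hat\bmu_d}{(\hat\bw_c^T\hat\bSigma\hat\bw_c)^{1/2}},
\]
use $\hat\bw_c^T\hat\bmu_d = 1$, control both numerator and denominator perturbations by the lemma above, and linearise $1/\sqrt{\cdot}$. The lower bound on $\hat\bw_c^T\hat\bSigma\hat\bw_c$ required for the linearisation comes from comparing against the oracle value $\bw_c^T\bSigma\bw_c \geq 1/\Delta_p$ via the bridge $\bw_c^{(1)}$: $\hat\bw_c^T\hat\bSigma\hat\bw_c \leq (\bw_c^{(1)})^T\hat\bSigma\bw_c^{(1)} = (\bw_c^{(1)})^T\bSigma\bw_c^{(1)} + O_p((c^2 \wedge s_c^{(1)}) a_n)$, and a matching reverse inequality using the rescaled competitor $\bw_c^{(1)}/((\bw_c^{(1)})^T\bmu_d)$ for the population problem is what brings in the $s_c$ term. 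For claim (i), I would decompose $R(\bw_c) - R(\hat\bw_c)$ analogously: the two-sided bridge $\bw_c \leftrightarrow \bw_c^{(1)} \leftrightarrow \hat\bw_c$ using the rescalings $\bw_c/(\bw_c^T\hat\bmu_d)$ and $\bw_c^{(1)}/((\bw_c^{(1)})^T\bmu_d)$ shows $\hat\bw_c^T\hat\bSigma\hat\bw_c$ is within $O_p(b_n)$ of $\bw_c^T\bSigma\bw_c$; the extra step of converting $\hat\bw_c^T\hat\bSigma\hat\bw_c$ to $\hat\bw_c^T\bSigma\hat\bw_c$ (needed because $W$, unlike $W_n$, uses the true covariance at $\hat\bw_c$) invokes the quadratic perturbation lemma at $\hat\bw_c$ itself and contributes the extra $\hat s_c a_n$ that distinguishes $d_n$ from $b_n$.

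The main technical obstacle is keeping each denominator of the form $\bw^T\bSigma\bw$ or $\bw^T\hat\bSigma\bw$ uniformly bounded away from zero so that $1/\sqrt{\cdot}$ may be linearised, since the argument invokes such denominators at all three anchor vectors. This requires careful bookkeeping: the rescaling factors $\bw_c^T\hat\bmu_d$ and $(\bw_c^{(1)})^T\bmu_d$ are only $1 + O_p(c a_n)$, and the resulting multiplicative error must be shown not to swamp the additive sparsity-based perturbation terms. A secondary annoyance is that $\hat\bw_c$ need not be unique because $\hat\bSigma$ is only positive semidefinite, so the statement must be read as holding for any minimiser, with $\hat s_c$ being $\|\hat\bw_c\|_0$ of the particular one chosen.
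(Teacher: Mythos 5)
Your proposal follows essentially the same route as the paper's proof: the same reduction via $W=1-\Phi(R)$ and Lipschitz continuity, the same bridging through the intermediate solution $\bw_c^{(1)}$ (optimality transferred once through the shared sample constraint and once through the shared population quadratic form), and the same elementwise perturbation bounds with the $\|\bw\|_1\leq c$ and $\|\bw\|_1^2\leq\|\bw\|_0\|\bw\|_2^2$ refinements producing the $c^2$, $s_c$, $s_c^{(1)}$ and $\hat{s}_c$ terms. The only notable difference is that the paper sidesteps what you call the main technical obstacle — keeping the quadratic-form denominators bounded away from zero for linearisation — by observing that $g(x)=\Phi(x^{-1/2})$ is globally Lipschitz on $(0,\infty)$, so the quadratic forms can be compared directly without any lower bound.
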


\begin{remark}
In Theorem~\ref{thm1}, $\|\cdot\|_{\infty}$ is the element wise super-norm.  When $\hat{\bSigma}$ is the sample covariance, by \cite{BickelLevina-04}, $\|\hat\bSigma-\bSigma\|_{\infty}=O_p(\sqrt{(\log p)/ {n}})$; hence we can take $a_n = \sqrt{(\log p) /{n}}$.  The first result in Theorem~\ref{thm1} shows the difference between the misclassification rate of $\hat{\delta}_{\bw_c}$ and its oracle version $\delta_{\bw_c}$; the second result says about the error in estimating the true misclassification rate of ROAD.
\end{remark}

\begin{remark}
In view of (\ref{eq2}), one intends to choose a $\bw$ that makes $\bw^T \bSigma \bw$ small and $\bw^T \bmu_d$ large.  A compromise of these dual objectives leads to a utility function
$$
    U(\bw) = -\bw^T \bSigma \bw + \xi \bmu_d^T\bw,
$$
as a proxy of the objective function (\ref{eq2}) for a fixed $\xi$.
For any $\xi > 0$, the optimal choice $\bw^{*}\in \mbox{\rm argmin } U(\bw)$ leads to the Fisher discriminant rule.  Consider also the regularized versions
$$
 \bw_c^* = \mbox{\rm argmin}_{\|\bw\|_1 \leq c} U(\bw), \quad \mbox{and} \quad
 \hat{\bw}_c^* = \mbox{\rm argmin}_{\|\bw\|_1 \leq c} \hat U(\bw),
$$
where $\hat{U}(\bw)$ is the utility function with $\bSigma$ and $\bmu_d$ estimated by
$\hat{\bSigma}$ and $\hat{\bmu}_d$.  Then, it is easy to see the following utility approximation: for any $\|\bw\|_1 \leq c$
$$
  |U(\bw) - \hat{U}(\bw)| \leq \| \hat{\bSigma} - \bSigma \|_\infty c^2 + \xi c \| \hat{\bmu}_d - {\bmu}_d \|_\infty
$$
and
$$
 |U(\hat{\bw}_c^*) - U(\bw_c^*)| \leq 2 \Bigl (\| \hat{\bSigma} - \bSigma \|_\infty c^2 + \xi c \| \hat{\bmu}_d - {\bmu}_d \|_\infty \Bigr ).
$$
\end{remark}

\begin{remark}
The most prominent technical challenge of our original problem (\ref{eq9})  is due to different dualities of penalization problems.  For the population version (\ref{eq9}), it can be reduced, by the Lagrange multiplier method, to the utility $U(\bw)$ optimization problem in Remark 2 with a given $\xi > 0$, while for the sample version (\ref{eq14}), it can be reduced to the utility $\hat{U}(\bw)$ optimization problem with a different $\hat{\xi}$. Therefore, the problem is not the same as the utility optimization problem in Remark 2:  $\hat{\xi}$ is hard to bound.  In fact, it is much harder and yields more complicated results.
\end{remark}

We now show how different the data projection direction in the regularized oracle can be from that in the Fisher discriminant.  To gain better insight, we reformulate the $L_1$ constraint problem as the following penalized version:
\begin{equation} \label{eq16}
   \bw^\lambda=\argmin_{\bw: \bmu_d^T\bw=1} \bw^T\bSigma\bw + \lambda\|\bw\|_1.
\end{equation}
The following characterizes its convergence to the Fisher discriminant weight $\bw_\infty$ as $\lambda \to 0$.

\begin{thm}\label{thm2}
Let $s$ be the size of the set $\{k:(\bSigma^{-1}\bmu_d)_k\neq0\}$.  Then,
we have
$$\|\bw^\lambda-\bw_\infty\|_2\leq \frac{\lambda\sqrt{s}}{\lambda_{\min}(\bSigma)},$$
where $\bw_\infty = \bSigma^{-1} \bmu_d / (\bmu_d^T \bSigma^{-1} \bmu_d)$ is the normalized Fisher discriminant, optimizing (\ref{eq16}) with $\lambda = 0$.
\end{thm}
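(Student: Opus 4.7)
The plan is to use the basic inequality from the optimality of $\bw^\lambda$, then expand the quadratic form around $\bw_\infty$, exploiting the KKT condition to kill the cross term, and finally use the sparsity of $\bw_\infty$ to control the $L_1$ piece. Concretely, since $\bw_\infty$ is feasible (it satisfies $\bmu_d^T\bw_\infty = 1$) and $\bw^\lambda$ is the minimizer, I would start from
\[
(\bw^\lambda)^T \bSigma \bw^\lambda + \lambda \|\bw^\lambda\|_1 \le \bw_\infty^T \bSigma \bw_\infty + \lambda \|\bw_\infty\|_1,
\]
and set $\bv = \bw^\lambda - \bw_\infty$, which satisfies $\bmu_d^T \bv = 0$.

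Next, I would expand $(\bw^\lambda)^T \bSigma \bw^\lambda = \bw_\infty^T \bSigma \bw_\infty + 2 \bw_\infty^T \bSigma \bv + \bv^T \bSigma \bv$ and observe that the Lagrange stationarity condition for $\bw_\infty$ (minimizing $\bw^T \bSigma \bw$ under $\bmu_d^T \bw = 1$) gives $\bSigma \bw_\infty \propto \bmu_d$, so $\bw_\infty^T \bSigma \bv \propto \bmu_d^T \bv = 0$. Therefore the basic inequality collapses to
\[
\bv^T \bSigma \bv \le \lambda \bigl(\|\bw_\infty\|_1 - \|\bw^\lambda\|_1\bigr),
\]
and the left side is bounded below by $\lambda_{\min}(\bSigma) \|\bv\|_2^2$.

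For the right side, I would invoke the sparsity of $\bw_\infty$. Let $S = \{k : (\bSigma^{-1}\bmu_d)_k \ne 0\}$, so $|S| = s$ and $(\bw_\infty)_{S^c} = 0$. Splitting by coordinates and using the triangle inequality,
\[
\|\bw_\infty\|_1 - \|\bw^\lambda\|_1
= \|(\bw_\infty)_S\|_1 - \|(\bw^\lambda)_S\|_1 - \|(\bw^\lambda)_{S^c}\|_1
\le \|\bv_S\|_1 - \|\bv_{S^c}\|_1 \le \|\bv_S\|_1 \le \sqrt{s}\,\|\bv\|_2,
\]
using Cauchy--Schwarz in the last step. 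Combining with the lower bound yields $\lambda_{\min}(\bSigma) \|\bv\|_2^2 \le \lambda \sqrt{s}\,\|\bv\|_2$, which gives the claimed bound after dividing.

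I do not foresee a real obstacle here; the proof is a standard ``basic inequality plus sparsity cone'' argument. The only point requiring a bit of care is checking that the KKT/Lagrangian computation indeed makes $\bw_\infty^T \bSigma \bv$ vanish on the tangent space $\{\bv : \bmu_d^T \bv = 0\}$; this just uses $\bSigma \bw_\infty = (\bmu_d^T \bSigma^{-1} \bmu_d)^{-1} \bmu_d$, which follows from the explicit form of $\bw_\infty$ given in the statement. Everything else reduces to the orthogonal decomposition on and off the support $S$ together with $\|\cdot\|_1 \le \sqrt{s}\,\|\cdot\|_2$ on $s$-dimensional subspaces.
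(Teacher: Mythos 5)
Your proof is correct and follows essentially the same route as the paper: the paper also writes $\bw^\lambda = \bw_\infty + \bgamma$, uses $\bSigma\bw_\infty = \bmu_d/(\bmu_d^T\bSigma^{-1}\bmu_d)$ together with $\bmu_d^T\bgamma=0$ to kill the cross term, bounds the penalty difference on the support of $\bSigma^{-1}\bmu_d$ by $\lambda\sqrt{s}\,\|\bgamma\|_2$ via the triangle inequality and Cauchy--Schwarz, and closes with the $\lambda_{\min}(\bSigma)\|\bgamma\|_2^2$ lower bound. No gaps; your KKT check of the cross-term cancellation is exactly the fact the paper invokes.
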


\subsection{Screening-based ROAD (S-ROAD)}\label{subsec::screenedpenalized}
Following the idea of Sure Independence Screening in \citet{FanLv-08}, we pre-screen all the features before hitting the ROAD.  The advantage of this two-step procedure is that we have a control on the total number of features used in the final classification rule.  A popular method for independent feature selection is the two-sample $t$-test \citep{Tibshirani-02,FanFan-08}, which is a specific case of marginal screening in \citet{FanLv-08}.  The sure screening property of such a method was demonstrated in \cite{FanFan-08}, which selects consistently the features with different means in the same settings as ours.

Once the features are selected, we can hit the ROAD, producing the vanilla Screening-based Regularized Optimal Affine Discriminant (S-ROAD1):
\begin{enumerate}
  \item[(1)] Employ a screening method to get $k$ features.
  \item[(2)] Apply ROAD to the $k$ selected features.
\end{enumerate}

In the first step, we use the $t$-statistics as the screening criteria and determine a data-driven threshold. This idea is motivated by a FDR criterion for choosing marginal screening threshold in \cite{ZhaoLi10}.
A random permutation $\pi$ of $\{1, \cdots, n\}$ is used to decouple $\bX_i$ and $Y_i$ so that the resulting data $(\bX_{\pi(i)}, Y_i)$ follow a null model, by which we mean that features have no prediction power for the class label. More specifically, the screening step is carried out as follows:

\begin{enumerate}
  \item Calculate the $t$-statistic $t_j$ for each feature $j$, where $j=1,\cdots,p$.
  \item For the permuted data pairs $(\bX_{\pi(i)}, Y_{i})$, recalculate the $t$-statistic $t_j^*$, for $j=1,\cdots, p$. (Intuitively,  if $j$ is the index of an important feature, $|t_j|$ should be larger than most of $|t_j^*|$, because the random permutation is meant to eliminate the prediction power of features.)
  \item For $q\in [0,1]$, let $\omega_{(q)}$ be the $q^{th}$ quantile of $\{|t_j^*|, j=1,2,\cdots,p\}$. Then, the selected set $\mathcal{A}$ is defined as
$$\mathcal{A}=\{j||t_j|\geq \omega_{(q)}\}.$$
\end{enumerate}
The choice of threshold is made to retain the features whose $t$-statistics are significant in the  two sample t-test.
Alternatively, if the user knows his $k$, (due to budget constraints, etc.), then he can just rank $|t_j|$'s and choose the threshold accordingly.

The S-ROAD1 tracks the performance of oracle procedures like sub-Fisher (10 features) in Figure 1.  The feature space gotten by step (1) can be expanded by including those features which are most correlated with what have already been selected.  This additional variant, S-ROAD2, aims at achieving the performance of sub-Fisher (20 features) type of procedure in Figure 1.

To elaborate on the theoretical properties of S-ROAD1, assume with no loss of generality that the first $k$ variables are selected in the screening step.  Denote by $\bSigma_k$ the upper left $k\times k$ block of $\bSigma$ and $\bmu_k$ the first $k$ coordinates of $\bmu_d$.   Let
\begin{align*}
\bbeta_c=\argmin_{\|\bbeta\|_1\leq c, \bbeta^T\bmu_k=1}\bbeta^T\bSigma_k\bbeta.
\end{align*}
The quantities $\hat{\bbeta}_c$ and $\bbeta_c^{(1)}$ are defined similarly to $\hat{\bw}_c$ and $\bw_c^{(1)}$ (defined right before Theorem 1). Then denote by $\by_c=(\bbeta_c^T,\bzero^T)^T$, $\hat{\by}_c=(\hat{\bbeta}_c^T, \bzero^T)^T$ and $\by_c^{(1)}=(\bw_c^{(1)},\bzero^T)^T$. The next two theorems can be verified along lines similar to Theorems 1 and 2. Hence, the proofs are omitted.

\begin{thm}  
If $\|\hat{\bSigma}_k-\bSigma_k\|_{\infty}=O_p(\sqrt{\log k/n})$, $\|\hat{\bmu}_k-\bmu_k\|_{\infty}=O_p(\sqrt{\log k/n})$, and $\lambda_{\min}(\bSigma_k)\geq \delta_0>0$,  then we have
$$W(\hat{\delta}_{\by_c})-W(\delta_{\by_c})=O_p(e_n),$$
and
$$
W_n(\hat{\delta}_{\by_c})-W(\delta_{\by_c})=O_p(e_n),$$
where $e_n = (c^2\vee k)\sqrt{\frac{\log k}{n}}$.
\end{thm}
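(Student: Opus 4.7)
The overall plan is a reduction to Theorem~\ref{thm1}, applied in the $k$-dimensional ``screened'' subproblem. Since the screening step freezes attention on the first $k$ coordinates, the padded vectors $\by_c=(\bbeta_c^T,\bzero^T)^T$ and $\hat{\by}_c=(\hat{\bbeta}_c^T,\bzero^T)^T$ only act on the first $k$ coordinates of $\bX$. Concretely, $\by_c^T\bmu_d=\bbeta_c^T\bmu_k$ and $\by_c^T\bSigma\by_c=\bbeta_c^T\bSigma_k\bbeta_c$, and analogous identities hold for $\hat{\by}_c$ paired with $\hat{\bmu}_d$ and $\hat{\bSigma}$. Plugged into (\ref{eq2}) and (\ref{eq15}), this yields the identifications
\begin{equation*}
W(\delta_{\by_c})=W^{(k)}(\delta_{\bbeta_c}),\quad W(\hat{\delta}_{\by_c})=W^{(k)}(\hat{\delta}_{\hat{\bbeta}_c}),\quad W_n(\hat{\delta}_{\by_c})=W_n^{(k)}(\hat{\delta}_{\hat{\bbeta}_c}),
\end{equation*}
where $W^{(k)}$ and $W_n^{(k)}$ denote the misclassification rate and its plug-in estimate in the $k$-dimensional subproblem with population pair $(\bmu_k,\bSigma_k)$ and sample pair $(\hat{\bmu}_k,\hat{\bSigma}_k)$.

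Next I would invoke Theorem~\ref{thm1} verbatim in the $k$-dimensional subproblem. Its hypotheses are met: the paper supplies $\lambda_{\min}(\bSigma_k)\geq \delta_0>0$, and the deviation bounds with $a_n=\sqrt{\log k/n}$ are assumed directly. The sparsity parameters $s_c=\|\bbeta_c\|_0$, $s_c^{(1)}=\|\bbeta_c^{(1)}\|_0$, and $\hat{s}_c=\|\hat{\bbeta}_c\|_0$ are trivially all at most $k$, since the feasible vectors live in $\mathbb{R}^k$. Hence the Theorem~\ref{thm1} rates become
\begin{equation*}
b_n=(c^2\vee s_c\vee s_c^{(1)})\sqrt{\tfrac{\log k}{n}}\leq (c^2\vee k)\sqrt{\tfrac{\log k}{n}}=e_n,
\qquad d_n=b_n\vee (\hat{s}_c\sqrt{\tfrac{\log k}{n}})\leq e_n.
\end{equation*}
The first conclusion of Theorem~\ref{thm1} then gives $W^{(k)}(\hat{\delta}_{\hat{\bbeta}_c})-W^{(k)}(\delta_{\bbeta_c})=O_p(d_n)=O_p(e_n)$, which translates back to $W(\hat{\delta}_{\by_c})-W(\delta_{\by_c})=O_p(e_n)$, the first claim.

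For the second claim, note that it differs from the second bound of Theorem~\ref{thm1}: here the reference is the oracle $W(\delta_{\by_c})$ rather than $W(\hat{\delta}_{\by_c})$. I would obtain it by the triangle decomposition
\begin{equation*}
W_n(\hat{\delta}_{\by_c})-W(\delta_{\by_c})=\bigl[W_n(\hat{\delta}_{\by_c})-W(\hat{\delta}_{\by_c})\bigr]+\bigl[W(\hat{\delta}_{\by_c})-W(\delta_{\by_c})\bigr],
\end{equation*}
and then invoke the second bound of Theorem~\ref{thm1} (transferred to $k$-dimensions via the identifications above) on the first bracket, together with the already-established first claim on the second bracket; both are $O_p(e_n)$.

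The only step that requires any care is the reduction itself: one must verify that the screening-then-restrict step legitimately transports the statements and that the constants in the three hypotheses of Theorem~\ref{thm1} are indeed covered by the assumptions here (in particular, that $\delta_0$ plays the role of $\sigma_0^2$ and that the $k$-dimensional $a_n$ is $\sqrt{\log k/n}$). Beyond this bookkeeping, everything is a direct consequence of Theorem~\ref{thm1}, which is why the authors state that the proofs can be omitted.
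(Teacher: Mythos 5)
Your proposal is correct and matches the paper's intent: the paper omits the proof precisely because the result follows by running the Theorem~\ref{thm1} argument in the $k$-dimensional screened subproblem with $a_n=\sqrt{\log k/n}$, bounding the sparsity quantities $s_c$, $s_c^{(1)}$, $\hat{s}_c$ by $k$, exactly as you do. Your handling of the second claim (triangle inequality combining the two bounds of Theorem~\ref{thm1}, since the reference point is the oracle error rather than $W(\hat{\delta}_{\by_c})$) is the right bookkeeping and is consistent with how the bounds are actually derived in the Theorem~\ref{thm1} proof.
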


This result is cleaner than Theorem \ref{thm1}, as the rate does not involve $s_c$ and $\hat{s}_c$:  they are simply replaced by the upper bound $k$. Accurate bounds for $s_c$ and $\hat{s}_c$ are of interest for future exploration,  but they are beyond the scope of this paper.

\begin{thm}
Let $\by^{\lambda}_k=\argmin_{\by: \bmu_d^T\by=1, \by\in M_k}R(\by)+\lambda\|\by\|_1$ where $M_k$ is the subspace in $R^p$ with the last $p-k$ components being zero, and
$\by^0 =( (\bSigma^{-1}_k\bmu_k)^T/(\bmu_k^T\bSigma_k^{-1}\bmu_k),\bzero^T)^T$. Then we have
$$\|\by^{\lambda}_k-\by^0\|_2\leq \frac{\lambda\sqrt{k}}{\lambda_{\min}(\bSigma_k)}.$$
\end{thm}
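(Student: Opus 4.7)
The plan is to mirror the argument for Theorem~\ref{thm2}, exploiting the fact that the constraint $\by \in M_k$ lets us rewrite everything as a $k$-dimensional optimization problem. Write any $\by \in M_k$ as $\by = (\bbeta^T, \bzero^T)^T$ with $\bbeta \in \mathbb{R}^k$. Then $R(\by) = \bbeta^T \bSigma_k \bbeta$, $\bmu_d^T \by = \bmu_k^T \bbeta$, $\|\by\|_1 = \|\bbeta\|_1$, and $\|\by_k^\lambda - \by^0\|_2 = \|\bbeta^\lambda - \bbeta^0\|_2$ where $\bbeta^0 = \bSigma_k^{-1}\bmu_k / (\bmu_k^T \bSigma_k^{-1}\bmu_k)$ and $\bbeta^\lambda$ solves the corresponding $k$-dimensional penalized problem with linear constraint $\bmu_k^T \bbeta = 1$. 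So it suffices to show $\|\bbeta^\lambda - \bbeta^0\|_2 \leq \lambda\sqrt{k}/\lambda_{\min}(\bSigma_k)$.

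Next, I would expand the quadratic objective $f(\bbeta) = \bbeta^T \bSigma_k \bbeta$ around $\bbeta^0$. Because $\bbeta^0$ is the minimizer of $f$ on the affine subspace $\{\bbeta : \bmu_k^T \bbeta = 1\}$, the Lagrange condition gives $\bSigma_k \bbeta^0 \propto \bmu_k$; combining this with the fact that $\bbeta^\lambda$ and $\bbeta^0$ share the same affine constraint (so their difference is orthogonal to $\bmu_k$) makes the first-order term vanish in the Taylor expansion. Hence
\[
 f(\bbeta^\lambda) - f(\bbeta^0) = (\bbeta^\lambda - \bbeta^0)^T \bSigma_k (\bbeta^\lambda - \bbeta^0) \geq \lambda_{\min}(\bSigma_k)\,\|\bbeta^\lambda - \bbeta^0\|_2^2.
\]
Optimality of $\bbeta^\lambda$ for the penalized problem, tested against the feasible point $\bbeta^0$, gives $f(\bbeta^\lambda) - f(\bbeta^0) \leq \lambda(\|\bbeta^0\|_1 - \|\bbeta^\lambda\|_1)$.

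The last step is an $\ell_1$-$\ell_2$ comparison. Let $S$ denote the support of $\bbeta^0$, so $|S| \leq k$. Using $\|\bbeta^0\|_1 - \|\bbeta^\lambda\|_1 = \|\bbeta^0_S\|_1 - \|\bbeta^\lambda_S\|_1 - \|\bbeta^\lambda_{S^c}\|_1 \leq \|\bbeta^0_S - \bbeta^\lambda_S\|_1 \leq \sqrt{k}\,\|\bbeta^\lambda - \bbeta^0\|_2$ by Cauchy-Schwarz on the $|S|$-sparse vector, I combine the two displays to obtain
\[
 \lambda_{\min}(\bSigma_k)\,\|\bbeta^\lambda - \bbeta^0\|_2^2 \leq \lambda\sqrt{k}\,\|\bbeta^\lambda - \bbeta^0\|_2,
\]
and the desired bound follows after dividing by $\|\bbeta^\lambda - \bbeta^0\|_2$.

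The only real subtlety, and the step I would double-check carefully, is the vanishing of the cross-term in the Taylor expansion: I need to verify the Lagrange-multiplier characterization of $\bbeta^0$ and use the shared affine constraint to conclude $(\bSigma_k \bbeta^0)^T(\bbeta^\lambda - \bbeta^0) = 0$. After that, the remaining pieces are routine: strong convexity gives the lower bound, optimality gives the upper bound, and the $\sqrt{k}$ factor emerges because the support of $\bSigma_k^{-1}\bmu_k$ is automatically contained in $\{1,\ldots,k\}$. The cleaner appearance of $\sqrt{k}$ (in place of the $\sqrt{s}$ of Theorem~\ref{thm2}) is precisely the reason the authors assert this result parallels Theorem~\ref{thm2}.
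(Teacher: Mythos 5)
Your proposal is correct and follows essentially the same route as the paper, which omits the proof precisely because it parallels Theorem~\ref{thm2}: shift by the unpenalized constrained minimizer, note the cross term vanishes since $\bSigma_k\bbeta^0\propto\bmu_k$ and the difference is orthogonal to $\bmu_k$, test optimality of the penalized solution against that feasible point, and combine the strong-convexity lower bound $\lambda_{\min}(\bSigma_k)\|\bbeta^\lambda-\bbeta^0\|_2^2$ with the reverse-triangle/Cauchy--Schwarz bound $\lambda\sqrt{k}\,\|\bbeta^\lambda-\bbeta^0\|_2$ on the penalty difference. Your dimension-reduction step and the observation that the relevant support size is at most $k$ (replacing the $\sqrt{s}$ of Theorem~\ref{thm2}) are exactly what the paper intends.
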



\subsection{Continuous Piecewise Linear Solution Path}\label{sec::piecewiselinear}

We use the word ``linear" when referring to ``affine", in line with the \emph{status quo} in the statistical community.  Continuous piecewise linear paths are of much interest to statisticians, as the property reduces the computational complexity of solutions and justifies the linear interpolations of solutions at discrete points. Previous well known investigations include \citet{Efron-04} and \citet{Rosset&Zhu07}.  Our setup differs from others mainly in that in addition to a complexity penalty, there is also an affine constraint. Our proof calls in point set topology, and is purely geometrical, in a spirit very different from the existing ones. In particular, we stress that the continuity property is intuitively correct, but it is far from a trivial consequence of the assumptions.   The authors also believe that the claim holds true even if the $p-1$ dimensional affine subspace constraint is replaced by more generic ones, though the technicality of the proof must be more involved.
\begin{thm}\label{thm::piecewiselinear}
Let $\bmu_d\in\mathbb{R}^p$ be a constant, and $\bSigma$ be a
positive definite matrix of dimension $p\times p$. Let
$$\bw_c=\argmin_{\|\bw\|_1\leq c,
\bw^{T}\bmu_d=1}\bw^{T}\bSigma\bw,$$ then $\bw_c$ is a continuous piecewise linear
function in $c$.
\end{thm}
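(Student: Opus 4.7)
The plan is to combine a KKT analysis — which yields an affine piece of the solution on each region of constant sign pattern — with a primal topological argument for continuity across the breakpoints where the sign pattern changes. Set $c_{\min} := 1/\max_i |\mu_{d,i}|$ and restrict to $c \geq c_{\min}$, the range on which $F_c := \{\bw : \|\bw\|_1 \leq c,\, \bmu_d^T \bw = 1\}$ is non-empty. Since $\bSigma$ is positive definite, the objective $\bw \mapsto \bw^T \bSigma \bw$ is strictly convex and $F_c$ is a compact convex polytope, so $\bw_c$ exists and is unique. A short convex-combination argument shows that the value function $v(c) := \bw_c^T \bSigma \bw_c$ is convex on $(c_{\min},\infty)$, hence continuous there.

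\textbf{Affine structure via KKT.} At each admissible $c$ the KKT conditions furnish multipliers $\lambda_c \geq 0$, $\xi_c \in \mathbb{R}$ and a subgradient $\boldsymbol{s}_c \in \partial \|\bw_c\|_1$ with $2 \bSigma \bw_c + \lambda_c \boldsymbol{s}_c = \xi_c \bmu_d$, $\bmu_d^T \bw_c = 1$, and $\lambda_c(\|\bw_c\|_1 - c) = 0$. When $\lambda_c = 0$, the solution reduces to the normalized Fisher discriminant $\bw_\infty$, a single fixed point. When $\lambda_c > 0$, fix the support $J = \mathrm{supp}(\bw_c)$ and signs $\sigma = \mathrm{sign}(\bw_{c,J})$; the KKT system collapses to the $(|J|+2) \times (|J|+2)$ linear system
\begin{align*}
   2 \bSigma_{JJ} \bw_{c,J} + \lambda_c \sigma &= \xi_c \bmu_{d,J}, \\
   \bmu_{d,J}^T \bw_{c,J} &= 1, \\
   \sigma^T \bw_{c,J} &= c,
\end{align*}
in the unknowns $(\bw_{c,J}, \lambda_c, \xi_c)$. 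Because $\bSigma_{JJ} \succ 0$, this system is non-singular, and since the parameter $c$ enters linearly on the right-hand side only, the solution is affine in $c$. There are at most $3^p$ possible pairs $(J,\sigma)$, so on $(c_{\min},\infty)$ the candidate map $c \mapsto \bw_c$ is a concatenation of at most $3^p$ affine pieces.

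\textbf{Continuity across breakpoints — the main obstacle.} The delicate step is continuity at the transitions where $J$ or $\sigma$ jumps. The sign vector $\sigma$ (and thus the subgradient selection) is discontinuous across such a transition, so continuity cannot be read off the fixed-sign affine representation and must be extracted from a primal argument. Fix $c^* > c_{\min}$ and $c_n \to c^*$. Since all $\bw_{c_n}$ eventually lie in the compact set $F_{c^*+1}$, some subsequence satisfies $\bw_{c_n} \to \bw^*$. The constraints pass to the limit so $\bw^* \in F_{c^*}$, continuity of $v$ forces $(\bw^*)^T \bSigma \bw^* = v(c^*)$, and uniqueness of the minimizer yields $\bw^* = \bw_{c^*}$. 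Every subsequential limit coincides with $\bw_{c^*}$, so $\bw_{c_n} \to \bw_{c^*}$ and $c \mapsto \bw_c$ is continuous on $(c_{\min},\infty)$. Together with the piecewise-affine description on each sign region, continuity pins down matching endpoint values across breakpoints, completing the proof. Geometrically, $\bw_c$ is the unique contact point between the shrinking ellipsoid $\{\bw^T \bSigma \bw \leq v(c)\}$ and the expanding polytope $F_c$, and the ``point set topology / purely geometrical'' ingredient flagged by the authors is precisely the fact that this contact point moves continuously even when the supporting face of the polytope changes — a topological property of the parametric family, not an algebraic consequence of any single fixed-sign KKT description.
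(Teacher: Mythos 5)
Your proposal is correct in substance but follows a genuinely different route from the paper. The paper changes variables via $\bv=\bSigma^{1/2}\bw$, so that $\bw_c$ becomes the projection of the origin onto the intersection of a fixed hyperplane with the dilating polytope $\{\|\bSigma^{-1/2}\bv\|_1\le c\}$, and then argues face by face with point-set topology: it shows (Lemmas 1--2) that on a one-sided neighborhood of any $c_0$ the minimizer is the projection onto the affine span of a single face, which moves linearly since the faces scale linearly in $c$, and proves continuity (Lemma 3) by a separate monotonicity-of-$h$ argument, never invoking duality. You instead work with the KKT system for a fixed support--sign pattern $(J,\sigma)$, obtaining explicit affine formulas for the pieces and a crude bound ($3^p$) on their number, and you get continuity by the standard compactness--uniqueness argument through the value function $v(c)$. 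Your route is shorter and more algorithmically informative (it explains exactly what is interpolated by the CCD/warm-start path); the paper's geometric route avoids multipliers altogether, handles degeneracies implicitly through uniqueness of the projection, and is what motivates the authors' remark that the result should persist for more general constraints.

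Two points need repair, though both are fixable. First, the bordered system is \emph{not} nonsingular merely because $\bSigma_{JJ}\succ 0$; you also need the constraint gradients $\bmu_{d,J}$ and $\sigma$ to be linearly independent. This is not a vacuous worry (the paper's own examples have $\bmu_d$ constant on its support), but when $\sigma=t\,\bmu_{d,J}$ the two equality constraints force $c=t$, so the degenerate pattern can occur only at an isolated value of $c$ and is absorbed by your continuity step; say this explicitly. Second, "finitely many sign patterns, hence finitely many affine pieces" is a leap: a priori the set of $c$'s sharing a pattern need not be an interval. The gap closes by noting that the continuous path agrees pointwise with one of finitely many affine maps, and on any subinterval avoiding their finitely many pairwise crossing points the closed agreement sets partition the interval, so by connectedness exactly one of them is active; also note that continuity at the left endpoint $c_{\min}$ should be obtained from the same compactness argument (convexity of $v$ only gives continuity on the open interval). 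With these additions the argument is complete.
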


\begin{prop}
$W(\delta_{\bw_c})$ is a Lipschitz function in c.
\begin{proof} Recall that
$$W(\delta_{\bw_c})=1-\Phi\left(1/(R(\bw_c))^{1/2}\right).$$
By Theorem \ref{thm::piecewiselinear} and the fact that composition of Lipschitz functions is again Lipschitz, the conclusion holds.
\end{proof}
\end{prop}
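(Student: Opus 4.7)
The plan is to decompose $W(\delta_{\bw_c})$ into a chain of maps and verify Lipschitz continuity at each link, then invoke the fact that the composition of (locally) Lipschitz functions is Lipschitz. Write
\[
c \;\longmapsto\; \bw_c \;\longmapsto\; R(\bw_c)=\bw_c^{T}\bSigma\bw_c \;\longmapsto\; 1/\sqrt{R(\bw_c)} \;\longmapsto\; 1-\Phi\!\left(1/\sqrt{R(\bw_c)}\right).
\]
The task reduces to showing each arrow is Lipschitz on the relevant range.

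First, by Theorem \ref{thm::piecewiselinear}, $c\mapsto\bw_c$ is continuous and piecewise linear on its domain $c\in[1/\max_i|\mu_{d,i}|,\infty)$. Because the piecewise-linear path has only finitely many breakpoints and stabilizes (becomes constant) once $c\geq\|\bSigma^{-1}\bmu_d\|_1/(\bmu_d^{T}\bSigma^{-1}\bmu_d)$, the slopes of its linear pieces are uniformly bounded, so $c\mapsto\bw_c$ is globally Lipschitz. In particular $\{\bw_c\}$ stays in a bounded subset of $\mathbb{R}^p$.

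Next, $\bw\mapsto\bw^{T}\bSigma\bw$ is continuously differentiable, hence Lipschitz on any bounded set. The image $R(\bw_c)$ is bounded above (since $\bw_c$ lies in a bounded set) and bounded below away from zero: the constraint $\bw_c^{T}\bmu_d=1$ gives $\|\bw_c\|_2\geq1/\|\bmu_d\|_2$, so
\[
R(\bw_c)=\bw_c^{T}\bSigma\bw_c\;\geq\;\lambda_{\min}(\bSigma)\,\|\bw_c\|_2^{2}\;\geq\;\lambda_{\min}(\bSigma)/\|\bmu_d\|_2^{2}>0.
\]
Therefore $r\mapsto 1/\sqrt{r}$ is Lipschitz on the range of $R(\bw_c)$, and $t\mapsto 1-\Phi(t)$ is globally Lipschitz with constant $1/\sqrt{2\pi}$. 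Composing the four Lipschitz maps yields the claim.

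The main obstacle is really the first link: extracting a global Lipschitz constant from the conclusion of Theorem \ref{thm::piecewiselinear}, which only asserts continuity and piecewise linearity, not a priori a uniform bound on slopes. This is resolved by observing that the solution path has finitely many pieces and becomes constant for $c$ large, so the supremum of piecewise slopes is attained and finite. Everything else is routine smoothness and boundedness from the quadratic-plus-Gaussian-CDF structure.
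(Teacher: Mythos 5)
Your proof is correct and follows essentially the same route as the paper's: Theorem \ref{thm::piecewiselinear} supplies Lipschitz continuity of $c\mapsto\bw_c$, and the conclusion follows by composing Lipschitz maps. The only difference is that you spell out the details the paper leaves implicit --- the finitely many linear pieces giving a uniform slope bound, the boundedness of the path, and the lower bound $R(\bw_c)\geq\lambda_{\min}(\bSigma)/\|\bmu_d\|_2^2$ ensuring $r\mapsto 1/\sqrt{r}$ is Lipschitz on the relevant range --- which is a welcome tightening rather than a different argument.
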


\section{Numerical Investigation}\label{sec::numerical-study}

In this section, several simulation and real data studies are conducted. We compare ROAD and its variants S-ROAD1 (Screening-based ROAD version 1), S-ROAD2 (Screening-based ROAD version 2) and D-ROAD (Diagonal ROAD) with
NSC (Nearest Shrunken Centroid), SCRDA (Shrunken
Centroids Regularized Discriminant Analysis), FAIR (Feature Annealed Independence Rule), NB (Naive Bayes),  NFR (Naive Fisher Rule, which uses the generalized inverse of the sample covariance matrix), as well as the Oracle.

In all simulation studies, the number of variables is $p=1000$, and the sample size of the training and testing data is $n=300$ for each class. Each simulation is repeated 100 times to test the stability of the method.
Without loss of generality, the mean vector of the first class $\bmu_1$ is set to be $\bzero$. We use five-fold cross-validation to choose the penalty parameter $\lambda$.

\subsection{Equal Correlation Setting, Sparse Fixed Signal}\label{sec::eqcor-sparsefixed}

In this subsection, we consider the setting where $\Sigma_{i,i}=1$ for all $i=1,\cdots,p$ and $\Sigma_{i,j}=\rho$ for all
$i,j=1,\cdots,p$ and $i\neq j$, and take $\bmu_2$ to be a sparse vector: $\bmu_2=(\bone_{10}^T,\bzero_{990}^T)^T$, where $\bone_d$ is a length $d$ vector with all entries 1, $\bzero_d$ is a length $d$ vector with all entries 0, where the sparsity size is $s_0=10$.  Also, we fix $\gamma=10$ in \eqref{eq13} for this simulation.  Sensitivity of the performance due to the choice of $\gamma$ will be investigated in the next subsection.

\begin{figure}
\caption{Solution Path for ROAD (left panel) and D-ROAD (right panel). Equal correlation setting ($\rho=0.5$), Sparse Signal ($s_0=10$) as in Section \ref{sec::eqcor-sparsefixed}.}\label{figure:simu:equalcorde}
\begin{center}

\includegraphics[scale=0.32]{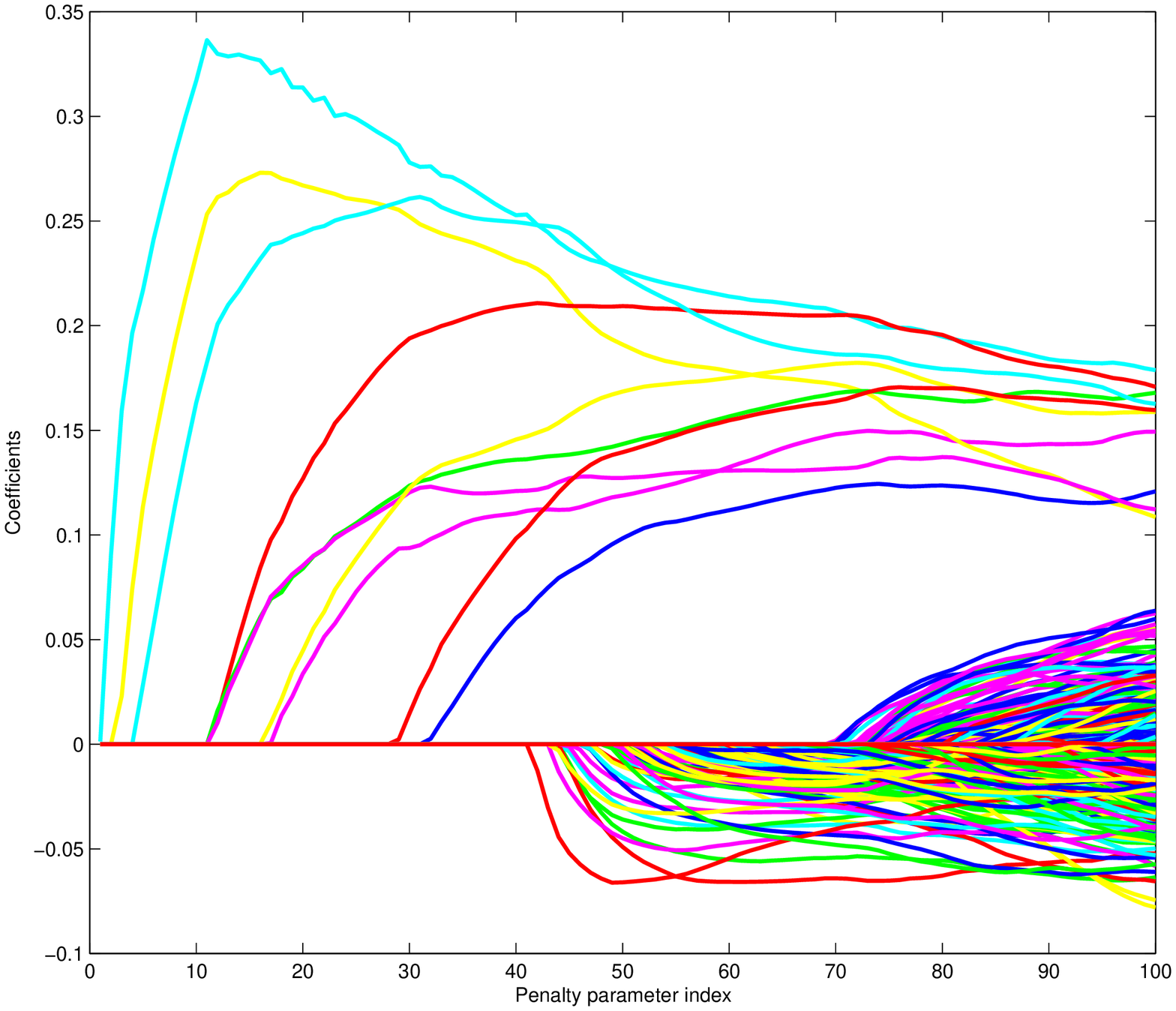}
\includegraphics[scale=0.32]{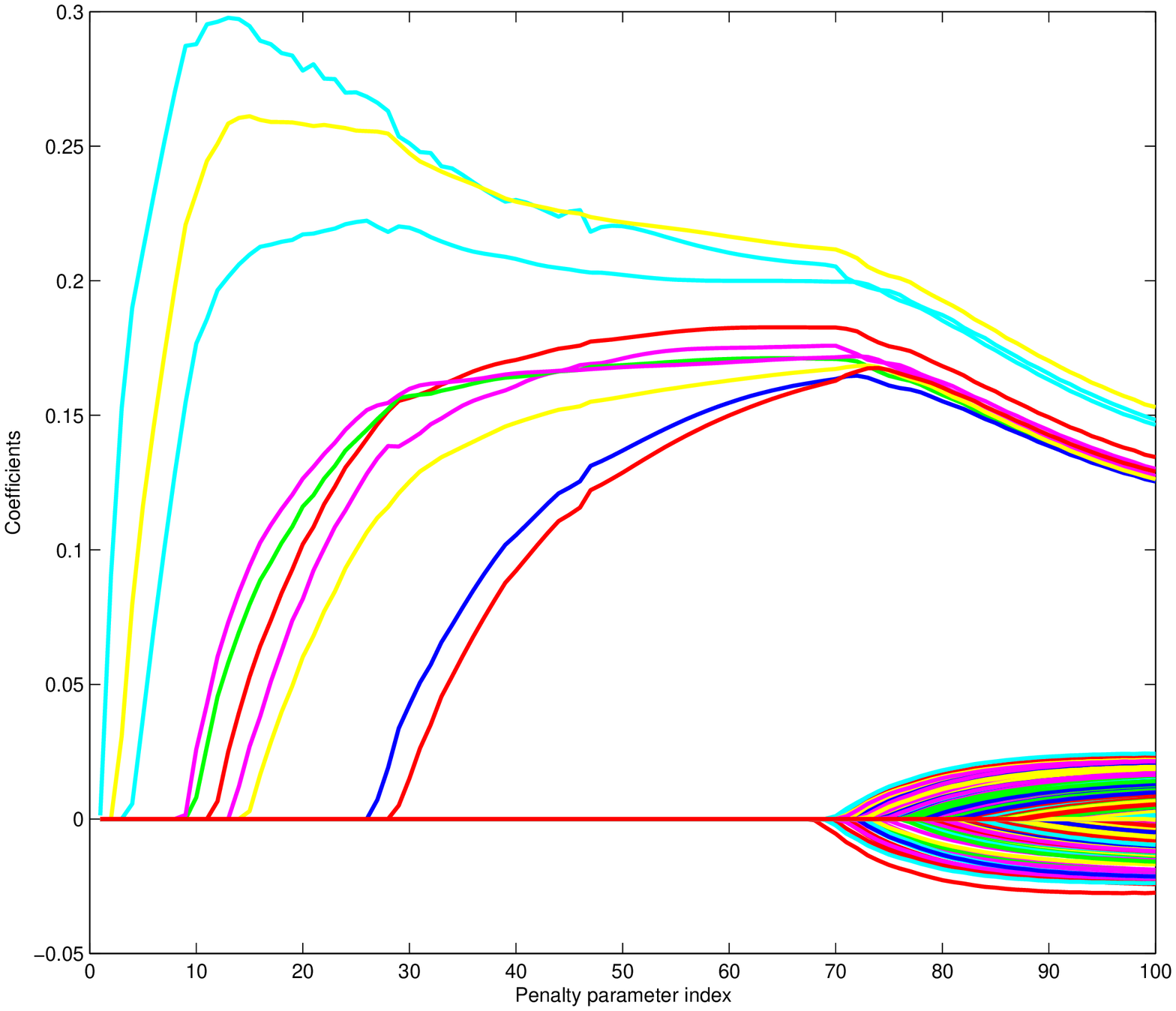}

\end{center}
\end{figure}

The solution paths for ROAD and D-ROAD of one realization are rendered in Figure \ref{figure:simu:equalcorde}. It is clear from the figure that, as the penalty parameter decreases (index increases), both ROAD and D-ROAD use more features. Also, the cutoff point for D-ROAD, where the number of features starts to increase dramatically, tends to come later than that for ROAD.

\begin{table}
\caption{Equal correlation setting, fixed signal: Median of the percentage for testing classification error and standard deviations (in parentheses). Signal all equal to 1. $s_0=10$.\label{tb-eqcor-fixed-median-error}}
\centering
\begin{tabular}{l|llllllllll}
\hline
$\rho$&ROAD&S-ROAD1&S-ROAD2&D-ROAD&SCRDA&NSC&FAIR&NB&Oracle\\
\hline
0&6.0(1.2)&6.0(1.1)&6.0(1.2)&5.7(1.1)&6.3(1.0)&5.9(1.0)&5.7(1.0)&11.2(1.4)&5.5(1.1)\\
0.1&6.3(2.5)&12.2(5.0)&8.8(2.4)&11.6(5.1)&10.3(1.4)&11.1(3.0)&12.4(1.4)&26.8(10.1)&5.0(0.9)\\
0.2&5.3(1.0)&16.0(6.3)&8.7(2.5)&16.1(7.5)&8.5(1.2)&14.5(4.3)&17.3(1.7)&34.8(11.6)&4.0(0.8)\\
0.3&4.2(0.9)&19.1(7.9)&7.8(2.6)&19.1(9.4)&6.6(1.1)&17.1(5.5)&20.8(1.7)&39.3(12.3)&3.2(0.7)\\
0.4&3.2(0.8)&22.8(9.4)&6.5(2.6)&22.2(9.9)&4.8(1.0)&20.5(6.1)&23.2(1.8)&41.6(11.3)&2.0(0.6)\\
0.5&2.0(0.6)&25.8(11.0)&4.8(1.4)&25.2(10.2)&2.9(0.7)&23.2(6.0)&25.3(1.6)&43.5(11.1)&1.3(0.5)\\
0.6&1.0(0.4)&18.3(12.4)&3.3(1.3)&28.1(10.3)&1.5(0.5)&25.8(5.7)&26.8(1.8)&44.4(12.1)&0.7(0.3)\\
0.7&0.3(0.2)&15.5(13.6)&1.7(1.0)&29.1(10.1)&0.5(0.3)&27.0(8.2)&28.2(2.0)&45.2(12.3)&0.2(0.2)\\
0.8&0.0(0.1)&5.0(14.0)&0.3(0.4)&29.5(9.9)&0.0(0.1)&28.3(8.7)&29.2(2.0)&46.2(10.3)&0.0(0.1)\\
0.9&0.0(0.0)&0.6(14.8)&0.0(0.1)&30.3(7.6)&0.0(0.2)&29.9(8.0)&30.2(1.9)&46.8(8.8)&0.0(0.0)\\
\hline
\end{tabular}
\end{table}

The simulation results for the pairwise correlations ranging from 0 to 0.9 are shown in Tables \ref{tb-eqcor-fixed-median-error} and \ref{tb-eqcor-fixed-nonzero}. We would like to mention that the results for NFR (Naive Fisher Rule) are not included in these (and the subsequent) tables because the test classification error is always  around 50\%, i.e., it is about the same as random guess. Also in the tables are the screening-based versions of the ROAD. S-ROAD1 refers to the vanilla version where we first apply the two-sample $t$-test to select any features with the corresponding $t$-test statistic with absolute value larger than the maximum absolute $t$-test statistic value calculated on the permuted data.  S-ROAD2 does the same except for each variable in S-ROAD1's pre-screened set, it adds an additional variable which is most correlated with that variable.  Figure \ref{fig3}, a graphical summary of Table~1,  presents the median test errors for different methods.  We can see from Table \ref{tb-eqcor-fixed-median-error} and Figure \ref{fig3} that the oracle classification error decreases as $\rho$ increases. This phenomenon is due to a similar reason to the two-dimensional showcase in the introduction. When $\rho$ goes to 1, all the variables contribute in the same way to boost the classification power. ROAD performs reasonably close to the Oracle, while working independence based method such as D-ROAD, NSC, FAIR and NB fail when $\rho$ is large.  The huge discrepancy shows the advantage of employing the correlation structure.   Since SCRDA also employ the correlation structure, it does not fail when $\rho$ is large. However, ROAD still outperforms SCRDA in all the correlation settings.  S-ROAD1 and S-ROAD2 both have misclassification rates similar to that of ROAD. 
It is worth to emphasize that the merits of the screening based ROADs mainly lie in the computation cost, which is reduced significantly by the pre-screening step.

The ROAD is a very robust estimator. It performs well even when all the variables are independent, in which case there could be a lot of noise for fitting the covariance matrix. Table \ref{tb-eqcor-fixed-median-error} indicates that ROAD has almost the same performance as D-ROAD, NSC and FAIR under the independence assumption, i.e.  $\rho=0$.  As $\rho$ increases, the edge of ROAD becomes more substantial. In general, the ROAD is recommended on the grounds that even with pairwise correlation of about 0.1 (which is quite common in microarray data as well as financial data), the gain is substantial.

Another interesting observation is that the D-ROAD performs similarly to NSC and FAIR in terms of classification error. An intuitive explanation is that they are all ``sparse" independence rules.   NSC uses  soft-thresholding on the standardized sample mean difference, and its equivalent LASSO derivation can be found in \citet{Wang&Zhu:nsclasso:2007}.  FAIR selects features with large marginal $t$-statistics in absolute values, while D-ROAD is another L1 penalized independence rule, whose implementation is different from NSC.

Table~2 summarizes the number of features selected by different classifiers.  Note that ROAD mimics Fisher discriminant coordinate $\bSigma^{-1}\bmu_d$, which has $p=1000$ nonzero entries under our simulated model.  Therefore, the large number of features selected by ROAD is not out of expectation.

\begin{figure}[t]
\caption{Median classification error  as a function of $\rho$ in the equi-correlation matrix.   Sparse $\bmu_d$ as in Section \ref{sec::eqcor-sparsefixed}.}\label{fig3}
\begin{center}
\includegraphics[scale=0.6]{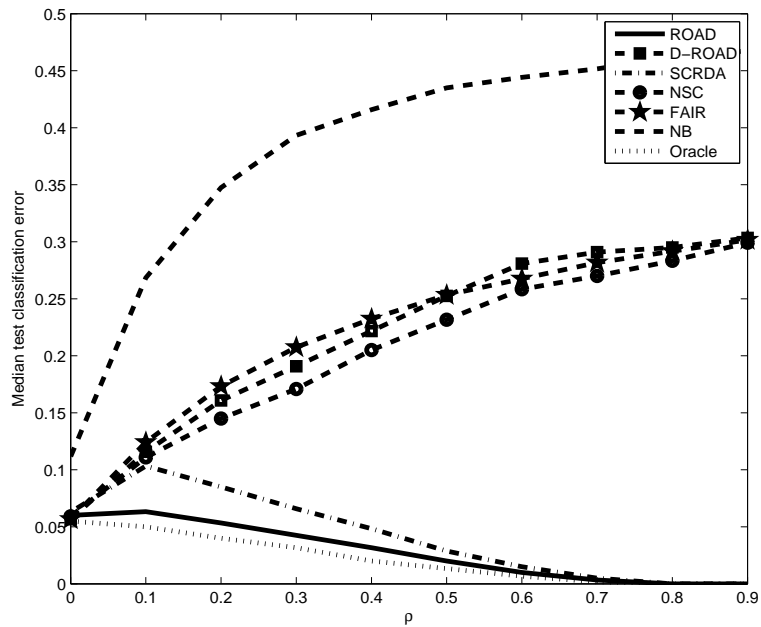}
\end{center}
\end{figure}

\begin{table}
\caption{Equal correlation setting, fixed signal: Median of number of nonzero coefficients and standard deviations (in parentheses). Signal all equal to 1. $s_0=10$.\label{tb-eqcor-fixed-nonzero}}
\centering
\begin{tabular}{l|lllllll}
\hline
$\rho$&ROAD&S-ROAD1&S-ROAD2&D-ROAD&SCRDA&NSC&FAIR\\
\hline
0&16.00(24.16)&10.00(1.31)&17.00(4.31)&29.50(58.54)&10.00(13.25)&10.00(44.86)&11.00(1.62)\\
0.1&117.50(30.50)&11.00(3.32)&21.00(4.15)&14.00(122.02)&1000.00(345.48)&35.50(117.32)&10.00(0.27)\\
0.2&130.50(33.33)&11.00(6.99)&22.00(8.98)&15.50(111.42)&1000.00(0.00)&95.00(120.17)&10.00(0.69)\\
0.3&136.50(36.16)&11.00(11.56)&22.00(10.38)&17.50(106.16)&1000.00(0.00)&103.50(117.52)&9.00(1.19)\\
0.4&135.00(34.43)&10.00(14.21)&22.00(17.07)&10.00(98.10)&1000.00(0.00)&70.00(131.65)&8.00(1.33)\\
0.5&138.50(38.17)&9.00(21.71)&22.00(21.56)&10.00(105.33)&1000.00(0.00)&65.00(137.97)&7.00(1.30)\\
0.6&148.00(49.74)&10.50(27.92)&22.00(31.88)&10.00(110.23)&1000.00(0.00)&38.00(141.91)&6.00(1.30)\\
0.7&170.50(52.29)&11.00(37.37)&22.00(41.76)&1.00(118.43)&1000.00(0.00)&27.50(140.10)&5.00(1.20)\\
0.8&203.00(27.72)&12.00(50.36)&24.00(59.23)&1.00(143.83)&1000.00(10.92)&15.00(157.98)&5.00(1.29)\\
0.9&151.50(8.02)&14.00(55.32)&28.00(50.45)&1.00(153.27)&1000.00(56.30)&14.00(225.38)&3.00(1.08)\\
\hline
\end{tabular}
\end{table}

\subsection{The Effect of $\gamma$}
Under the settings of the previous subsection, we look into the variation of  the ROAD performance as $\gamma$ changes. In Table \ref{tb-different-gamma}, the number of active variables varies; however, the median classification error remains about the same for a broad range of $\gamma$ values. The reason is that the cross validation step chooses the ``best" $\lambda$ according to a specific $\gamma$. Therefore, the final performance remains almost unchanged. Since our primary concern is the classification error, we fix $\gamma=10$ for simplicity in the subsequent simulations and in the real data analysis.
\begin{table}
\caption{Equal correlation setting; signals all equal to 1; $s_0=10$. Results for different $\gamma$.\label{tb-different-gamma}}
\centering
\begin{tabular}{l|llll}
\hline
&&$\rho=0$&$\rho=0.5$&$\rho=0.9$\\
\hline
\multirow{5}{2cm}{Median classification error (in percentage)}&ROAD$_{\gamma=0.01}$&5.8(1.2)&2.7(0.6)&0.2(0.2)\\
&ROAD$_{\gamma=0.1}$&6.0(1.2)&2.0(0.6)&0.2(0.1)\\
&ROAD$_{\gamma=1}$&6.0(1.3)&2.0(0.6)&0.0(0.1)\\
&ROAD$_{\gamma=10}$&6.0(1.2)&2.0(0.6)&0.0(0.0)\\
&ROAD$_{\gamma=100}$&6.2(1.2)&2.3(0.6)&0.0(0.1)\\
\hline
&&$\rho=0$&$\rho=0.5$&$\rho=0.9$\\
\hline
\multirow{5}{2cm}{Median number of nonzeros}&ROAD$_{\gamma=0.01}$&14.0(19.2)&129.5(42.5)&657.0(179.6)\\
&ROAD$_{\gamma=0.1}$&14.0(19.6)&137.0(37.6)&773.5(103.2)\\
&ROAD$_{\gamma=1}$&16.5(22.9)&139.0(37.9)&514.0(39.7)\\
&ROAD$_{\gamma=10}$&16.0(24.2)&138.5(38.2)&151.5(8.0)\\
&ROAD$_{\gamma=100}$&22.0(16.1)&114.5(9.4)&94.0(9.6)\\
\hline
\end{tabular}
\end{table}

\subsection{Block Diagonal Correlation Setting, Sparse Fixed Signal}\label{sec::block-diagonal-sparsefixed}
In this subsection, we follow the same setup as in Section \ref{sec::eqcor-sparsefixed} except that the covariance matrix $\bSigma$ is taken to be block diagonal. The first block is a $20\times20$ equi-correlated matrix and the second block is a $(p-20)\times(p-20)$ equi-correlated matrix, both with pairwise correlation $\rho$. In other words,
$\Sigma_{i,i}=1$ for all $i=1,\cdots,p$,  $\Sigma_{i,j}=\rho$ for all
$i,j=1,\cdots,20$ and $i\neq j$, $\Sigma_{i,j}=\rho$ for all $i,j=21,\cdots,p$ and $i\neq j$, and the rest elements are zeros. As before, we examine the performances of various estimators when $\rho$ varies. The percentage for testing  error and the number of selected features in the estimators are shown in Tables \ref{tb-block-diagonal-fixed-median-error} and \ref{tb-block-diagonal-fixed-nonzero}, respectively.

\begin{table}
\caption{Block diagonal correlation setting, sparse fixed signal: Median of the percentage for testing classification error and standard deviations (in parentheses). Signal all equal to 1. $s_0=10$.\label{tb-block-diagonal-fixed-median-error}}
\centering
\begin{tabular}{l|lllllllll}
\hline
$\rho$&ROAD&S-ROAD1&S-ROAD2&D-ROAD&SCRDA&NSC&FAIR&NB&Oracle\\
\hline
0&6.0(1.2)&6.0(1.1)&6.0(1.2)&5.7(1.1)&6.0(0.1)&5.5(0.3)&5.7(1.0)&11.2(1.4)&5.5(1.1)\\
0.1&10.8(3.6)&13.0(4.8)&10.3(3.0)&12.8(4.4)&13.0(0.3)&12.5(0.8)&12.7(1.5)&25.7(7.6)&8.8(1.2)\\
0.2&10.7(4.1)&18.0(5.7)&9.7(3.6)&17.7(5.9)&14.2(1.1)&17.2(0.4)&17.7(1.6)&34.4(7.9)&8.8(1.2)\\
0.3&9.5(3.8)&23.2(5.5)&8.8(4.0)&23.2(5.6)&12.7(0.9)&20.0(0.8)&20.4(1.6)&38.3(7.5)&7.7(1.0)\\
0.4&8.0(3.3)&29.7(4.2)&7.5(4.2)&29.3(4.1)&11.0(1.2)&23.8(1.3)&23.2(1.8)&41.0(6.9)&6.6(1.1)\\
0.5&6.2(2.6)&30.1(3.9)&5.7(0.9)&30.0(3.1)&8.7(0.4)&26.2(1.7)&25.1(1.7)&42.2(6.6)&5.0(1.0)\\
0.6&4.2(0.9)&30.3(4.2)&4.0(0.8)&30.3(2.2)&6.4(0.1)&26.5(1.2)&26.8(1.8)&43.6(7.0)&3.5(0.7)\\
0.7&2.3(0.7)&30.0(6.4)&2.2(0.7)&30.6(2.1)&2.5(0.7)&28.1(3.2)&28.2(2.0)&44.2(6.5)&1.8(0.6)\\
0.8&0.8(0.4)&29.8(9.8)&0.7(0.4)&30.6(2.1)&0.6(0.4)&29.2(1.6)&29.2(2.0)&44.8(5.7)&0.7(0.3)\\
0.9&0.0(0.1)&29.8(12.8)&0.0(0.1)&30.6(1.9)&0.2(0.2)&29.2(1.2)&30.2(1.9)&45.2(4.9)&0.0(0.1)\\
\hline
\end{tabular}
\end{table}

\begin{table}
\caption{Block diagonal correlation setting, fixed signal: Median of number of nonzero coefficients and standard deviations (in parentheses). Signal all equal to 1. $s_0=10$.\label{tb-block-diagonal-fixed-nonzero}}
\centering
\begin{tabular}{l|lllllll}
\hline
$\rho$&ROAD&S-ROAD1&S-ROAD2&D-ROAD&SCRDA&NSC&FAIR\\
\hline
0&16.00(24.16)&10.00(1.31)&17.00(4.31)&29.50(58.54)&10.00(1.15)&10.00(1.73)&11.00(1.62)\\
0.1&48.50(35.99)&10.00(2.73)&20.00(3.77)&14.00(26.73)&33.00(17.79)&65.00(38.84)&18.00(2.67)\\
0.2&48.00(31.48)&10.00(4.59)&20.00(5.84)&10.00(18.23)&38.00(117.54)&10.00(16.17)&18.00(2.77)\\
0.3&47.50(42.75)&9.00(5.28)&20.00(6.03)&10.00(11.80)&208.00(103.94)&10.00(13.58)&18.00(3.91)\\
0.4&40.50(32.42)&1.00(4.82)&20.00(10.08)&1.00(9.25)&27.00(90.95)&33.00(14.22)&17.00(5.43)\\
0.5&40.50(33.23)&1.00(4.88)&20.00(10.10)&1.00(8.51)&24.00(76.79)&10.00(1.15)&7.00(5.98)\\
0.6&39.50(30.03)&1.00(3.74)&20.00(14.53)&1.00(5.92)&127.50(6.36)&6.50(2.12)&6.00(5.98)\\
0.7&40.00(41.35)&1.00(4.71)&20.00(8.07)&1.00(2.49)&94.50(2.12)&9.50(0.71)&5.00(5.52)\\
0.8&55.00(58.67)&1.00(6.20)&20.00(18.32)&1.00(0.93)&58.00(2.83)&6.00(5.66)&5.00(4.84)\\
0.9&120.00(30.66)&1.00(21.29)&20.00(30.46)&1.00(0.35)&20.00(0.00)&8.00(2.83)&3.00(3.81)\\
\hline
\end{tabular}

\end{table}

In this block-diagonal setting, we have observed similar results to those in Section \ref{sec::eqcor-sparsefixed}: ROAD and S-ROAD2 perform significantly better than the other methods. One interesting phenomenon is that S-ROAD1 does not perform well when $\rho$ is large. The reason is that the current true model has 20 important features, and by looking only at marginal contribution, S-ROAD1 misses some important variables, as shown in Table \ref{tb-block-diagonal-fixed-median-error}. Indeed, because those features have no expressed mean differences, it does not fully take advantage of highly correlated features. In contrast, S-ROAD2 is able to pick up all the important variables,  takes advantage of correlation structure, and leads to a sparser model than the vanilla ROAD. In view of the results from this simulation setting and the previous one, we recommend S-ROAD2 over S-ROAD1.

\subsection{Block-Diagonal Negative Correlation Setting, Sparse Fixed Signal}

In this subsection, we again follow a similar setup as in Section \ref{sec::eqcor-sparsefixed}. Here, the covariance matrix $\bSigma$ is taken to be block diagonal with each block size equals to 10. Each block is an equi-correlated matrix with pairwise correlation $\rho=-0.1$. In other words,
$\bSigma=\mbox{diag}(\bSigma_0,\cdots,\bSigma_0)$, where $\bSigma_0$ is a $10\times 10$ equi-correlated matrix with correlation $-0.1$. Here, $\bmu_2=0.5\times(\bone_5^T, \bzero_5^T,\bone_5^T,\bzero_{985}^T)^T$ and the sparsity size is $s_0=10$.  As before, we examine the performances of various estimators when $\rho$ varies. The percentage for testing error and the number of selected features in the estimators are shown in Table \ref{tb-block-diagonal-neg-corr}.

\begin{table}

\caption{Block-Diagonal Negative Correlation Setting, Sparse Fixed Signal: Median error (in percentage) and number of nonzero coefficients with standard deviations in parentheses.\label{tb-block-diagonal-neg-corr}}
\centering
\resizebox{\textwidth}{!}{ %
\begin{tabular}{llllllllll}
\hline
&ROAD&S-ROAD1&S-ROAD2&D-ROAD&SCRDA&NSC&FAIR&NB&Oracle\\
\hline
error&7.3(3.4)&16.0(5.2)&12.7(3.4)&17.8(8.0)&18.5(1.1)&20.8(0.6)&24.8(2.1)&33.5(2.1)&3.2(0.7)\\
nonzero&168.00(47.59)&10.00(2.40)&20.00(3.58)&15.50(15.32)&24.00(0.58)&41.00(17.90)&59.00(4.27)&--&--\\
\hline
\end{tabular}}
\end{table}

\subsection{Random Correlation Setting, Double Exponential Signal}

To evaluate the stability of the ROAD, we take a random matrix $\bSigma$ as the correlation structure, and use a signal $\bmu$ whose nonzero entries come from a double exponential distribution. A random covariance matrix $\bSigma$ is generated as follows:
\begin{enumerate}
  \item For a given integer $m$ (here we take $m=10$), generate a $p\times m$ matrix $\bOmega$ where $\Omega_{i,j} \sim \mbox{Unif}(-1,1)$. Then the matrix $\bOmega\bOmega^T$ is positive semi-definite.
\item Denote $c_{\bOmega}=\min_i(\bOmega\bOmega^T)_{ii}$. Let $\bXi=\bOmega\bOmega^T+c_{\bOmega}\bI$, where $\bI$ is the identity matrix. It is clear that $\bXi$ is positive definite.
\item Normalize the matrix $\bXi$ to get $\bSigma$ whose diagonal elements are unity.
\end{enumerate}
For the signal, we take $\bmu$ to be a sparse vector with sparsity size $s=10$, and the nonzero elements are generated from the double exponential distribution with density function

$$f(x)= \exp(-2 |x|).$$

Table \ref{tb-randcor-de} summaries the results. It shows that even under random correlation setting and random signals, our procedure ROAD still outperforms other competing classification rules such as SCRDA, NSC and FAIR in terms of the classification error.

\begin{table}

\caption{Random correlation setting, double exponential signal: Median error (in percentage) and number of nonzero coefficients with standard deviations in parentheses.\label{tb-randcor-de}}
\centering
\resizebox{\textwidth}{!}{ %
\begin{tabular}{llllllllll}
\hline
&ROAD&S-ROAD1&S-ROAD2&D-ROAD&SCRDA&NSC&FAIR&NB&Oracle\\
\hline
error&2.0(0.6)&11.0(5.2)&5.8(3.9)&17.0(2.2)&5.2(1.1)&16.2(1.3)&17.0(1.6)&46.2(2.4)&1.3(0.5)\\
nonzero&83.00(39.54)&4.00(8.13)&9.00(10.69)&1.00(3.89)&1000.00(0.00)&4.00(0.58)&1.00(0.17)&--&--\\
\hline
\end{tabular}}
\end{table}

\subsection{Real Data}\label{sec::realdata}
Though the ROAD seems to perform best in a broad spectrum of idealized experiments, it has to be tested against reality.  We now evaluate the performance of our newly proposed estimator on three popular gene expression data sets:  ``Leukemia" \citep{Golub-1999},  ``Lung Cancer" \citep{Gordon-2002}, and ``Neuroblastoma data set" \citep{Oberthuer06}. The first two data sets come with predetermined,
separate training and test sets of data vectors. The Leukemia
data set contains $p = 7,129$ genes for $n_1 = 27$ acute lymphoblastic
leukemia (ALL) and $n_2$ = 11 acute myeloid leukemia
(AML) vectors in the training set. The test set includes 20 ALL
and 14 AML vectors. The Lung Cancer data set contains
$p = 12,533$ genes for $n_1 = 16$ adenocarcinoma (ADCA) and
$n_2 = 16$ mesothelioma training vectors, along with 134 ADCA
and 15 mesothelioma test vectors. The Neuroblastoma data set, obtained via the
MicroArray Quality Control phase-II (MAQC-II) project, consists of gene expression profiles
for $p=10,707$ genes from 251 patients of the German Neuroblastoma Trials NB90-NB2004,
diagnosed between 1989 and 2004. We analyzed the gene expression data with the 3-year event-free survival (3-year
EFS), which indicates whether a patient survived 3 years after the
diagnosis of neuroblastoma. There are
239 subjects with the 3-year EFS information available (49 positives and 190 negatives). We randomly select
83 subjects (19 positives and 64 negatives, which are about one third of the total subjects) as the training set and the
rest as the test set.
The readers can find more details about the data sets in the original papers.

Following \citet{Dudoit-2002} and \citet{FanFan-08}, we standardized each sample to zero mean and unit variance. The classification results for ROAD, S-ROAD1, S-ROAD2, SCRDA, FAIR, NSC and NB are shown in Tables \ref{tb-leukemia}, \ref{tb-lung} and \ref{tb-nb}. For the leukemia and lung cancer data, ROAD performs the best in terms of classification error. For the neuroblastoma data, NB performs best, however, it makes use of all 10,707 genes, which is not very desirable. In contrast, ROAD has a competitive performance in terms of classification error and it only selects 33 genes.  Although SCRDA has a close performance, the number of selected variables varies a lot for the three data set (264, 2410, 1). Overall, ROAD is a robust classification tool for high-dimensional data.

\begin{table}
\caption{Classification error and number of selected genes by various methods of leukemia data. Training and testing samples are of sizes 38 and 34, respectively.\label{tb-leukemia}}
\centering
\begin{tabular}{l|lllllll}\hline
&ROAD&S-ROAD1&S-ROAD2&SCRDA&FAIR&NSC&NB\\\hline
Training Error & 0 & 0 & 0 & 1 & 1 & 1 & 0 \\
  Testing Error & 1 & 3 & 1 & 2 & 1 & 3 & 5 \\
  No. of selected genes & 40 & 49 & 66 & 264 & 11 & 24 & 7129 \\
\hline
\end{tabular}
\end{table}

\begin{table}
\caption{Classification error and number of selected genes by various methods of lung cancer data. Training and testing samples are of sizes 32 and 149, respectively.\label{tb-lung}}
\centering
\begin{tabular}{l|lllllll}\hline
&ROAD&S-ROAD1&S-ROAD2&SCRDA&FAIR&NSC&NB\\\hline
Training Error & 1 & 1 & 1 & 0 & 0 & 0 & 6 \\
  Testing Error & 1 & 4 & 1 & 3 & 7 & 10 & 36 \\
  No. of selected genes & 52 & 56 & 54 & 2410 & 31 & 38 & 12533 \\
\hline
\end{tabular}
\end{table}

\begin{table}
\caption{Classification error and number of selected genes by various methods of neuroblastoma
data. Training and testing samples are of sizes 83 and 163, respectively.\label{tb-nb}}
\centering
\begin{tabular}{l|lllllll}\hline
&ROAD&S-ROAD1&S-ROAD2&SCRDA&FAIR&NSC&NB\\\hline
Training Error & 3 & 22 & 14 & 16 & 15 & 16 & 14 \\
  Testing Error & 33 & 47 & 37 & 37 & 44 & 35 & 32 \\
  No. of selected genes & 33 & 1 & 9 & 1 & 18 & 41 & 10707 \\
\hline
\end{tabular}
\end{table}

\section{Discussion}\label{sec::summary}
With a simple two-class gaussian model, we explored the bright side of using correlation structure for high dimensional classification.   Targeting directly on the classification error, ROAD employs un-regularized pooled sample covariance matrix and sample mean difference vector without suffering from curse of dimensionality and noise accumulation.  The sparsity of chosen features is evident in simulations and real data analysis; however, we have not discovered intuitively good conditions on $\bSigma$ and $\bmu_d$, such that a certain desirable sparsity pattern of $\hat{\bw}_c$ follows. We resolve a part of the problem by introducing screening-based variants of ROAD, but the precise control of the sparsity size  
is worth for further investigation.  Furthermore, we can explore the conditions for the model selection consistency.

In this paper, we have restricted ourselves to the linear rules. They can be easily extended to nonlinear discriminants via transformations such as low order polynomials or spline basis functions.
One may also use the popular ``kernel tricks" in the machine learning community.  See, for example, \citet{HTF09} for more details. After the features are transformed, we can hit the ROAD. 
One essential technical challenge of the current paper is rooted in a stochastic linear constraint. The precise role of this constraint has not been completely pinned down.   In the following, a preliminary proposal is provided for extending ROAD to multi-class settings.

\subsection{Extension to Multi-Class}\label{sec:multi class}
In this section, we outline an extension of ROAD to multi-class classification problems. Suppose that there are $K$ classes, and for $j=1,\cdots, K$, the $j$th class has mean $\bmu_j$ and common covariance $\bSigma$. Denote the overall mean of features by $\bmu_a = K^{-1} \sum_{j=1}^K\bmu_j$.  Fisher's reduced rank approach to multi-class classification is a minimum distance classifier in some lower dimensional projection space.  The first step is to find $s\leq K - 1$ discriminant coordinates ($\bw_1^*$, $\cdots$, $\bw_s^*$) that separate the population centroids $\{\bmu_j\}_{j=1}^K$ the most in the projected space $\mathcal{S}=\text{span}\{\bw_1^*$, $\cdots$, $\bw_s^*\}$.  Then the population centroids $\bmu_j$'s and new observation $\bX$ are both projected onto $\mathcal{S}$. The observation $\bX$ will be assigned to the class whose projected centroid is closest to the projection of $\bX$ onto $\mathcal{S}$.  Note that it is usually not necessary to compute all $K-1$ discriminant coordinates whose span is that of all $K$ population centroids; the process can stop as long as the projected population centroids are well spread out in $\mathcal{S}$.

We adopt the above procedure for multi-class classification.  However, the large-$p$-small-$n$ scenario demands regularization in selecting discriminant coordinates. Indeed, in the Fisher's proposal the first discriminant coordinate $\bw_1^*$ is the solution of
\begin{eqnarray}\label{eq:Rayleigh quotient}
\max_{\bw}\frac{\bw^T \bB\bw}{\bw^T\bSigma\bw}\,,
\end{eqnarray}
where $\bB=\bPsi^T\bPsi$, and the $j$th column of $\bPsi^T$ is $(\bmu_j-\bmu_a)$. Note that a multiple of $\bB$ is the between-class variance matrix.  The second discriminant coordinate $\bw_2^*$ is the maximizer of $\bw^T\bB\bw/ (\bw^T\bSigma\bw)$ with constraint $\bw_1^{*T}\bSigma\bw=0$, and the subsequent discriminant coordinates are determined analogously.

Since solving \eqref{eq:Rayleigh quotient} is the same as looking for the eigenvector  of $\bSigma^{-1/2}\bB \bSigma^{-1/2}$ corresponding to the largest eigenvalue, diverging spectrum and noise accumulation have to be considered when we work on the sample.   To address these issues, we regularize $\bw$ as in the binary case,
\begin{eqnarray}\label{eq:multiclass}
\min_{\|\bw\|_1\leq c, \bw^T \bB\bw=1}\bw^T\bSigma\bw ,
\end{eqnarray}
whose solution is the first regularized discriminant coordinate $\bar{\bw}^*_1$. Here, equation \eqref{eq:multiclass} is related to the null space method in \citep{Krzanowski-1995}.
The second regularized discriminant coordinate is obtained by solving \eqref{eq:multiclass} with additional constraint $\bar{\bw}^{*T}_1\bSigma \bw=0$. Other regularized discriminant coordinates can be found similarly.  With these $s$ ($\leq K-1$) regularized discriminant coordinates, the classifier is now based on the minimum distance to the projected centroids in the $s$-dimensional space spanned by $\{\bar{\bw}^*_j\}_{j=1}^s$.

The implementation and theoretical properties for multi-class ROAD are interesting topics for future research.


\section*{Acknowledgements}
The authors thank the Editor, the Associate Editor and
two referees, whose comments have greatly improved the scope and presentation
of the paper. The financial support from
NSF grant DMS-0704337 and NIH Grant R01-GM072611 is
greatly acknowledged.
\appendix
\section{Proofs}
\subsection{Proof of Theorem \ref{thm1}}  We now show first part of the theorem. Let $f_0(\bw)=\bw^T\bmu_d/(\bw^T\bSigma\bw)^{1/2}$, $f_1(\bw)=\bw^T\hat\bmu_d/(\bw^T\bSigma\bw)^{1/2}$,
and $f_2(\bw)=\bw^T\hat\bmu_d/(\bw^T\hat\bSigma\bw)^{1/2}$.  Then, it follows easily that
\begin{align*}
 |f_0(\bw_c)-f_2(\hat\bw_c)|&\leq \Lambda_1 + \Lambda_2,
\end{align*}
where $\Lambda_1=|f_0(\bw_c)-f_1(\bw^{(1)}_c)|$ and $\Lambda_2=|f_1(\bw^{(1)}_c)-f_2(\hat\bw_c)|$.
We now bound both terms separately in the following two steps.

\textbf{Step 1(bound $\Lambda_1$):}
For any $\bw$, we have
\begin{align}\label{eq17}
 |f_0(\bw)-f_1(\bw)|&\leq |\frac{\bw^T\mu_d}{(\bw^T\bSigma\bw)^{1/2}}-\frac{\bw^T\hat\bmu_d}{(\bw^T\bSigma\bw)^{1/2}}|\nonumber\\
&\leq \frac{\|\bw\|_1\|\hat\bmu_d-\bmu_d\|_{\infty}}{\|\bw\|_2\lambda^{1/2}_{\min}(\bSigma)}\nonumber\\
&\leq \sqrt{\|\bw\|_0}\frac{\|\hat\bmu_d-\bmu_d\|_{\infty}}{\sigma_0}\nonumber\\
&= \sqrt{\|\bw\|_0}O_p(a_n).
\end{align}
Since $\bw^{(1)}_c$ maximizes $f_1(\cdot)$, it follows that
\begin{align}
  f_0(\bw_c)-f_1(\bw^{(1)}_c) &=f_0(\bw_c)-f_1(\bw_c)+[f_1(\bw_c)-f_1(\bw^{(1)}_c)]\nonumber\\
&\leq f_0(\bw_c)-f_1(\bw_c), \label{eq18}
\end{align}
and similarly noticing $w_c$ maximizing $f_0(\cdot)$, we have
\begin{align}
  f_1(\bw^{(1)}_c)-f_0(\bw_c) &=f_1(\bw^{(1)}_c)-f_0(\bw^{(1)}_c)+[f_0(\bw^{(1)}_c)-f_0(\bw_c)]\nonumber\\
&\leq f_1(\bw^{(1)}_c)-f_0(\bw^{(1)}_c). \label{eq19}
\end{align}
Combining the results of (\ref{eq18}) and (\ref{eq19}) and using (\ref{eq17}), we conclude that
\begin{align*}
  \Lambda_1 = |f_0(\bw_c)-f_1(\bw^{(1)}_c)| = O_p\left((s_c\vee s_c^{(1)})a_n\right).
\end{align*}
By the Lipschitz property of $\Phi$,
$$
       |\Phi(f_1(\bw^{(1)}_c))-\Phi(f_0(\bw_c))|=O_p\left((s_c\vee s_c^{(1)})a_n\right).
$$

\textbf{Step 2(bound $\Lambda_2$):}
Note that $\bw^{(1)}_c$ and $\hat \bw_c$ both are in the set $\{\bw:  \bw^T \bmu_d = 1, \| \bw \|_1 \leq 1\}$.  Therefore,  by definition of minimizers, we have
$$
 {\bw^{(1)}_c}^T\bSigma\bw^{(1)}_c -\hat\bw_c^T\bSigma\hat\bw_c \leq 0,
 \mbox{ and } {\hat \bw_c}^T \hat \bSigma \hat\bw_c - {\bw^{(1)}_c}^T
 \hat \bSigma\bw^{(1)}_c\leq 0.
$$
Consequently,
\begin{align}\label{eq20}
{\bw^{(1)}_c}^T\bSigma\bw^{(1)}_c -\hat\bw_c^T\hat\bSigma\hat\bw_c&=[{\bw^{(1)}_c}^T\bSigma\bw^{(1)}_c -\hat\bw_c^T\bSigma\hat\bw_c]+\hat\bw_c^T\bSigma\hat\bw_c-\hat\bw_c^T\hat\bSigma\hat\bw_c\nonumber\\
&\leq \hat\bw_c^T(\bSigma-\hat\bSigma)\hat\bw_c\nonumber\\
&\leq \|\bSigma-\hat\bSigma\|_{\infty}\|\hat{\bw}_c\|^2_1\nonumber\\
&\leq  c^2\|\bSigma-\hat\bSigma\|_{\infty} \nonumber\\
&= O_p(a_nc^2).
\end{align}
By the same argument, we also have
\begin{align}\label{eq21}
\hat\bw_c^T\hat\bSigma\hat\bw_c - {\bw^{(1)}_c}^T\bSigma\bw^{(1)}_c &=[\hat\bw_c^T\hat\bSigma\hat\bw_c -{\bw^{(1)}_c}^T\hat\bSigma\bw^{(1)}_c]+{\bw^{(1)}_c}^T\hat\bSigma\bw^{(1)}_c- {\bw^{(1)}_c}^T\bSigma\bw^{(1)}_c\nonumber\\
&\leq {\bw^{(1)}_c}^T(\hat\bSigma-\bSigma){\bw^{(1)}_c}\nonumber\\
&\leq  c^2\|\bSigma-\hat\bSigma\|_{\infty}\nonumber\\
&= O_p(a_nc^2).
\end{align}
Combination of (\ref{eq20}) and (\ref{eq21}) leads to
$$|\hat\bw_c^T\hat\bSigma\hat\bw_c - {\bw^{(1)}_c}^T\bSigma\bw^{(1)}_c|=O_p(a_nc^2).$$


%

Let $g(x) = \Phi(x^{-1/2})$.  The function $g$ is Lipschitz on $(0, \infty)$, as $g'(x)$ is bounded on $(0, \infty)$.  Hence, $|\Phi(f_2(\hat\bw_c))-\Phi(f_0(\bw^{(1)}_c))|=O_p(a_nc^2)$.  Thus,
\begin{align*}
|W_n(\hat\delta_{\bw_c},\btheta)-W(\delta_{\bw_c},\btheta)|
&\leq |\Phi(f_2(\hat\bw_c))-\Phi(f_0(\bw^{(1)}_c))|+|\Phi(f_1(\hat\bw^{(1)}_c))-\Phi(f_0(\bw_c))|\\
& = O_p\left((s_c\vee s_c^{(1)})a_n\right)+O_p(a_nc^2)\\
& = O_p(b_n).
\end{align*}

We now prove the second result of the Theorem.
Since $|\hat\bw_c^T\bSigma\hat\bw_c-\hat\bw_c^T\hat\bSigma\hat\bw_c|=O_p(a_nc^2)$, we have
\begin{align}\label{eq:f1f2diff}
|\Phi(f_1(\hat\bw_c))-\Phi(f_2(\hat\bw_c))|=O_p(a_nc^2).
\end{align}
By (\ref{eq17}), (\ref{eq:f1f2diff}), and the first part of the Theorem, we have
\begin{align*}
&|W(\hat\delta_{\bw_c},\btheta)-W(\delta_{\bw_c},\btheta)|\\
=&  |\Phi(f_0(\hat\bw_c))-\Phi(f_0(\bw_c))|\\
\leq& |\Phi(f_0(\hat\bw_c))-\Phi(f_1(\hat\bw_c))|+|\Phi(f_1(\hat\bw_c))-\Phi(f_2(\hat\bw_c))|+|\Phi(f_2(\hat\bw_c))-\Phi(f_0(\bw_c))|\\
=&   O_p(\hat{s}_ca_n)+O_p(a_nc^2)+O_p(b_n)\\
=&O_p(d_n).
\end{align*}
This completes the proof of Theorem.

\subsection{Proof of Theorem \ref{thm2}}
Let $\bw^\lambda=\bw_\infty + \bgamma^{\lambda}$. Then, from the definition of $\bw^\lambda$, we have
\begin{align}
\bgamma^{\lambda}
&=\argmin_{\bmu_d^T\bw_\infty+\bmu_d^T\bgamma=1}R(\bw_\infty+\bgamma)
  +\lambda\|\bw_\infty+\bgamma\|_1 \nonumber \\
& = \argmin_{\bmu_d^T\bgamma=0}f(\bgamma), \label{eq23}
\end{align}
where $f(\bgamma) =  R(\bgamma) + \lambda\Sigma_{k\in K^{c}}|\bgamma_k|+\lambda\Sigma_{k\in K}\left(|\bw_{\infty}^k+\bgamma_k|-|\bw_{\infty}^k|\right)$.
In the last statement, we used the fact that
$$
   \bw_\infty^T \bSigma \bgamma = \bmu_d^T \bgamma / (\bmu_d^T \bSigma^{-1} \bmu_d) = 0.
$$
We write $\bgamma$ for $\bgamma^{\lambda}$ for short in what follows.

By (\ref{eq23}), we have $f(\bgamma)\leq f(\bzero) =0$.  This implies that
\begin{eqnarray*}
R(\bgamma) & \leq &  \lambda\Sigma_{k\in K}\left(|\bw_{\infty}^k| - |\bw_{\infty}^k+\bgamma_k|\right)
 \leq  \lambda \Sigma_{k\in K}   |\bgamma_k| \leq \lambda \sqrt{s} \| \bgamma \|_2.
\end{eqnarray*}



On the other hand, $R(\bgamma)\geq \lambda_{\min}(\bSigma)\|\bgamma\|_2^2$.
Bringing the upper and lower bound of $R(\bgamma)$ together, we conclude that $$\|\bgamma\|_2\leq\frac{\lambda\sqrt{s}}{\lambda_{\min}(\bSigma)}.$$
The proof is now complete.


\subsection{Proof of Theorem \ref{thm::piecewiselinear}}
By the positive definiteness of $\bSigma$, $\bSigma^{-1}$ and $\bSigma^{-\frac{1}{2}}$ are also positive definite.   Let
$\bv=\bSigma^{1/2}\bw$, then the transformation $\bv \mapsto \bw$ is linear.  Define
$$\bv_c=\argmin_{\|\bSigma^{-1/2}\bv\|_1\leq c,
\bv^{T}\bar{\bmu}_{d}=1}\bv^{T}\bv,$$ where
$\bar{\bmu}_{d}=\bSigma^{-1/2}\bmu_d$.  It is enough to show that
$\bv_c$ is piecewise linear in $c$.

Let $\Omega_c=\{\bv:\|\bSigma^{-1/2}\bv\|_1\leq c\}$ and $S=\{\bv:
\bv^{T}\bar{\bmu}_{d}=1\}$. When $c$ is small, the solution set is
$\emptyset$; when $c$ is large, the constraint $\Omega_c$ is inactive.  Denote by ``a" the smallest ``c" such that $\Omega_c\bigcap S\neq \emptyset$, and by ``b" the smallest such that $\bv_c$ are the same for all $c\geq b$.
Hence we are interested in $c\in[a,b]$, when changes in $c$ actually
affects the solution.

Let $P$ be the projection of the origin $O$ onto the hyperplane $S$
in the $p$ dimensional space. Let $$\mathcal{F}_c=\{S_{1,c}^0,
\cdots, S_{j_0,c}^0;S_{1,c}^1, \cdots,
S_{j_1,c}^1;\cdots;S_{1,c}^{p-1}, \cdots S_{j_{p-1},c}^{p-1}\},$$
where $S_{j,c}^i$ denotes an $i$-dimensional face of $\Omega_c$,
i.e., $S_{j,c}^0$ represents a vertex, $S_{j,c}^1$ an edge, and
$S_{j,c}^{p-1}$ a facet.  It is clear that $\mathcal{F}_c$ is a
finite set.


Define a mapping $\varphi: [a,b]\rightarrow
\mathbb{Z}\times\mathbb{Z}$, where $\varphi(c)=(i,j)$ such that \rmnum{1})
$\bv_c\in S_{j,c}^i$ and  \rmnum{2}) $i$ is minimal. By definition, this mapping is single
valued.

For any $c_0\in(a,b]$, denote $D_{c_0}=\{(i,j)|\forall
\epsilon>0,\exists c\in[c_0-\epsilon,c_0) \text{ s.t. }
\varphi(c)=(i,j)\}$. The set $D_{c_0}$ is non-empty because the collection
$\{(i,j)\in\mathbb{Z}\times\mathbb{Z}|S^i_{j, c}\in\mathcal{F}_c\}$
is finite. Then the theorem follows from compactness of $[a,b]$ and
Lemma \ref{lemma:D-c0-sigleton}, Remark \ref{remark::linear} and Lemma \ref{lemma:bvc-continuous}.

\begin{lem}\label{lem::piecewiselinear}
$\forall c_0\in (a,b]$, $\exists \epsilon>0$ such that  $\forall
(i,j)\in D_{c_0}$ and  $\forall c\in(c_0-\epsilon, c_0)$, $P^{i}_{j,c}\in
S^{i\circ}_{j,c}\cap S$, where $P^i_{j,c}$ is the projection of $P$ onto
$S\cap \widetilde{S^i_{j,c}}$, and $\widetilde{S^i_{j,c}}$ denotes the $i$-dimensional affine space in which $S^i_{j,c}$ embeds, and $S^{i\circ}_{j,c}$ is the interior of $S^i_{j,c}$, where the topology is the natural subspace topology restricted to $\widetilde{S^i_{j,c}}$.

\end{lem}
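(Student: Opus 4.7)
The plan is to exploit the linear scaling $\Omega_c = c \cdot \Omega_1$ of the feasible polytope, which forces $P^i_{j,c}$ to depend affinely on $c$, and then reduce the lemma to a one-parameter question inside the fixed face $S^i_{j,1}$.

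Because every face scales linearly, $\widetilde{S^i_{j,c}} = c\bv_0 + V$ where $V$ is a fixed $i$-dimensional linear subspace and $\bv_0$ a fixed vector (both independent of $c$). Intersecting with $S = \{\bv : \bv^{T}\bar{\bmu}_d = 1\}$ produces an affine subspace whose anchor moves affinely in $c$ and whose direction subspace $L := V \cap \bar{\bmu}_d^{\perp}$ is independent of $c$. Applying the orthogonal projection formula then yields $P^i_{j,c} = \bA + c\bB$ for constant vectors $\bA, \bB$, so the normalized map $f(c) := P^i_{j,c}/c = \bA/c + \bB$ takes values in the fixed affine hull $\widetilde{S^i_{j,1}}$ and the conclusion becomes $f(c) \in S^{i\circ}_{j,1}$.

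Each of the finitely many facet inequalities cutting $S^i_{j,1}$ out of $\widetilde{S^i_{j,1}}$ pulls back through $f$ to an inequality of the form $\alpha_{\ell}/c + \beta_{\ell} \leq d_{\ell}$, which for $c > 0$ is monotone in $c$ and hence defines an interval; consequently $I := \{c > 0 : f(c) \in S^i_{j,1}\}$ is itself an interval. Since $(i,j) \in D_{c_0}$, pick a sequence $c_k \to c_0^{-}$ with $\varphi(c_k) = (i,j)$; minimality of $i$ puts $\bv_{c_k}$ in the relative interior of $S^i_{j,c_k}$, so a relative neighborhood of $\bv_{c_k}$ in $\widetilde{S^i_{j,c_k}} \cap S$ lies inside $\Omega_{c_k} \cap S$. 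Strict convexity of $\|\bv\|^2$ then promotes $\bv_{c_k}$ from local to global minimizer on that affine slice, identifying it with $P^i_{j,c_k}$; therefore $f(c_k) \in S^{i\circ}_{j,1}$, each $c_k \in I$, and connectedness of $I$ delivers $(c_0 - \epsilon_1, c_0) \subseteq I$ for some $\epsilon_1 > 0$.

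To upgrade $f(c) \in S^i_{j,1}$ to $f(c) \in S^{i\circ}_{j,1}$, observe that $f$ meets the relative boundary $\partial S^i_{j,1}$ precisely where $\alpha_{\ell}/c + \beta_{\ell} = d_{\ell}$ for some $\ell$; each such equation either has a unique solution in $c$ or is satisfied identically, and the identical case is excluded by $f(c_k) \in S^{i\circ}_{j,1}$. Hence the exceptional set is finite, and shrinking $\epsilon \leq \epsilon_1$ to avoid any exceptional values below $c_0$ delivers $(c_0 - \epsilon, c_0) \subseteq \{c : f(c) \in S^{i\circ}_{j,1}\}$, with membership in $S$ automatic from the definition of $P^i_{j,c}$. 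The main obstacle is Step 1: verifying that $P^i_{j,c}$ really is affine in $c$ rests on the delicate observation that although $\widetilde{S^i_{j,c}}$ translates with $c$, the direction subspace $V$ (and therefore $L$) does not, a structural feature that requires care because it depends on how the hyperplane $S$ transversally slices each scaled face. Once affine dependence is in hand, connectedness of $I$ and finiteness of boundary hits are routine polyhedral consequences.
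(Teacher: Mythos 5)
Your proof is correct and shares the skeleton of the paper's argument --- extract a sequence $c_k\nearrow c_0$ with $\varphi(c_k)=(i,j)$, use minimality of $i$ (plus convexity of the squared norm on the affine slice) to identify $\bv_{c_k}$ with $P^i_{j,c_k}$ lying in $S^{i\circ}_{j,c_k}\cap S$, and then show that the set of $c$ for which the projection stays in the relative interior is an interval reaching up to $c_0$ --- but you implement the connectedness step differently. The paper argues geometrically: since the affine hulls $\widetilde{S^i_{j,c}}$ are parallel translates, $c\mapsto P^i_{j,c}$ traces a line segment, and membership of the two endpoints in the convex cone $\cup_{c\geq 0}S^{i\circ}_{j,c}$ forces membership at intermediate $c$, which delivers the relative-interior conclusion in one stroke. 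You instead make the affine dependence $P^i_{j,c}=\bA+c\bB$ explicit from the scaling $\Omega_c=c\,\Omega_1$ and the orthogonal projection formula, rescale to the fixed face $S^i_{j,1}$, note that each facet inequality pulls back to $\alpha_{\ell}/c+\beta_{\ell}\leq d_{\ell}$, monotone in $c>0$, so the good set is an interval, and then upgrade closure membership to interior membership because each non-degenerate facet equation has at most one root in $c$ while the identically tight case is excluded by strict interiority at the $c_k$. Your route is more computational and makes fully explicit the linearity of $c\mapsto P^i_{j,c}$, which the paper asserts in a single sentence; the paper's cone argument is shorter and needs no separate boundary-hit bookkeeping since openness is preserved directly. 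Two small patches are needed: first, your projection formula presupposes $S\cap\widetilde{S^i_{j,c}}\neq\emptyset$ with the direction subspace $V\not\subseteq\bar{\bmu}_d^{\perp}$; this is automatic for $(i,j)\in D_{c_0}$, because a face whose affine hull is parallel to $S$ can meet $S$ for at most one value of $c$, whereas $D_{c_0}$ supplies infinitely many such $c$, so you should state this exclusion. Second, the lemma requires a single $\epsilon$ valid for all $(i,j)\in D_{c_0}$, so finish by taking the minimum of your $\epsilon$'s over the finitely many indices in $D_{c_0}$, as the paper does.
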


\begin{proof}
Fix $c_0\in(a,b]$. For any $(i,j)\in D_{c_0}$ and
$\bar{\epsilon}>0$, by the definition of $D_{c_0}$, there exists $c'\in[c_0-\bar{\epsilon}, c_0)$ such that $\varphi(c')=(i,j)$.  The minimality of $i$ in the definition for $\varphi$ implies that $\bv_{c'}=P^i_{j,c'}\in S^{i\circ}_{j,c'}$, which in the interior of $S^{i}_{j,c'}$.    Therefore, $P^i_{j,c'}\in
S^{i\circ}_{j,c'}\cap S$.  By arbitrariness of $\bar{\epsilon}$, $\exists (c_n)\nearrow c_0$ such that $P^i_{j,c_n}\in S^{i\circ}_{j,c_n}\cap S$ for all $n$.

It can also be shown that $\{c|P^i_{j,c}\in S^{i\circ}_{j,c}\cap S\}$ is
connected:  let $P^i_{j,c'_1}\in S^{i\circ}_{j,c'_1}\cap S$, $P^i_{j,c'_2}\in
S^{i\circ}_{j,c'_2}\cap S$, $c'_1<c'_2$.  For any $c'_3\in(c'_1, c'_2)$,
$P^i_{j,c'_3}$ is on the line segment with endpoints $P^i_{j,c'_1}$
and $P^i_{j,c'_2}$ because $\widetilde{S^i_{j,c}}$ are parallel
affine subspace in $\mathbb{R}^p$. Let $S^i_{j,cone}:=\cup_{c\geq0}S^{i\circ}_{j,c}$, then it is
a cone. Since $P^i_{j,c'_1}\in S^i_{j,cone}$ and $P^i_{j,c'_2}\in
S^i_{j,cone}$, we have $P^i_{j,c'_3}\in S^i_{j,cone}$.  Then,
$P^{i}_{j,c'_3}\in S^i_{j,cone}\cap S\cap
\widetilde{S^i_{j,c'_3}}=S^{i\circ}_{j,c'_3}\cap S$.  Hence, $\exists \epsilon_{ij}>0$ such that for all $c\in[c_0-\epsilon_{ij}, c_0)$, $P^i_{j, c}\in S^{i\circ}_{j, c}$.
Take $\epsilon=\min_{(i,j)\in D_{c_0}}\epsilon_{ij}$, the claim follows.

\end{proof}

\begin{lem}\label{lemma:D-c0-sigleton}
$\forall c_0\in (a,b]$, $D_{c_0}$ is a singleton, and  $\exists \epsilon'>0$ such that $\bv_c$ is
linear in $c$ on $(c_0-\epsilon', c_0)$.

\end{lem}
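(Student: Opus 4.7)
The plan is to combine Lemma \ref{lem::piecewiselinear} with two structural facts: (i) each candidate point $P^i_{j,c}$ depends affinely on $c$, so $\|P^i_{j,c}\|_2^2$ is a polynomial in $c$ of degree at most two; (ii) strict convexity of the squared Euclidean norm, together with the uniqueness of $\bv_c$ as the minimizer over the convex set $\Omega_c\cap S$, forces distinct faces in $D_{c_0}$ to give rise to genuinely different quadratic norm profiles. Finiteness of $\mathcal{F}_c$ then reduces the question to comparing finitely many quadratics on a small left-neighborhood of $c_0$.

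First I would record fact (i). Because the constraint $\|\bSigma^{-1/2}\bv\|_1\leq c$ is positively homogeneous, $\Omega_c=c\,\Omega_1$ and every face of $\Omega_c$ has the form $\widetilde{S^i_{j,c}}=\{\bv:A_{ij}\bv=c\,\bb_{ij}\}$ for a fixed $A_{ij}$ and $\bb_{ij}$. Intersecting with the fixed hyperplane $S$ yields an affine subspace whose right-hand side is affine in $c$; the closest point to the origin in this intersection, computed via the Moore--Penrose pseudoinverse, is therefore an affine function of $c$, and $\|P^i_{j,c}\|_2^2$ is thus a polynomial in $c$ of degree at most two.

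Next I would show that distinct $(i_1,j_1),(i_2,j_2)\in D_{c_0}$ give non-identical quadratic norm functions. By Lemma \ref{lem::piecewiselinear} both candidates are feasible on a common left-neighborhood $(c_0-\epsilon,c_0)$. If $P^{i_1}_{j_1,c}=P^{i_2}_{j_2,c}$ at some $c$ there, a single point would lie in the relative interiors of two distinct faces of $\Omega_c$, which is impossible. If the two points are distinct but have equal norm, their midpoint lies in $\Omega_c\cap S$ by convexity and has strictly smaller squared norm by strict convexity; then the unique $\bv_c$ equals neither, so $\varphi(c)\notin\{(i_1,j_1),(i_2,j_2)\}$ throughout that left-neighborhood, contradicting the very definition of $D_{c_0}$.

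The main obstacle is ruling out oscillation of $\varphi(c)$ among several elements of $D_{c_0}$ on every left-neighborhood of $c_0$. Using the definition of $D_{c_0}$ and the finiteness of $\mathcal{F}_{c_0}$, I would first obtain $\epsilon_1>0$ such that $\varphi(c)\in D_{c_0}$ for all $c\in(c_0-\epsilon_1,c_0)$; on this interval $\bv_c$ is the $P^i_{j,c}$ of smallest Euclidean norm over $(i,j)\in D_{c_0}$. By the previous step, the difference of the squared-norm quadratics for any two distinct members of $D_{c_0}$ is a nonzero polynomial of degree at most two, hence has at most two zeros, so one strictly dominates the other on a smaller left-neighborhood. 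Intersecting these finitely many neighborhoods produces $\epsilon'>0$ and a single label $(i^*,j^*)\in D_{c_0}$ with $\varphi(c)=(i^*,j^*)$ throughout $(c_0-\epsilon',c_0)$, which forces $D_{c_0}=\{(i^*,j^*)\}$ and exhibits $\bv_c=P^{i^*}_{j^*,c}$ as affine linear in $c$ there.
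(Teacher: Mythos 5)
Your proposal is correct, and it reaches both conclusions of the lemma by a route that is organized differently from the paper's. The shared backbone is the same: Lemma \ref{lem::piecewiselinear}, the finiteness of the face index set, and the fact that each $P^i_{j,c}$ is affine in $c$ so that $c\mapsto\|P^i_{j,c}\|_2^2$ is a quadratic (you justify the affineness via $\Omega_c=c\,\Omega_1$ and the pseudoinverse, which the paper simply asserts). Where you diverge: the paper argues by contradiction, producing a sign-changing function $g(c)=\|P-P^i_{j,c}\|_2^2-\|P-P^{i'}_{j',c}\|_2^2$ that would have infinitely many zeros accumulating at $c_0$, and then proves the singleton claim by a separate geometric argument (two open faces can both contain $\bv_c$ only if one is nested in the other, contradicting minimality of $i$ in $\varphi$). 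You instead first localize $\varphi(c)\in D_{c_0}$ on a left-neighborhood, prove via disjointness of relative interiors and a midpoint/strict-convexity argument that distinct members of $D_{c_0}$ have non-identical norm quadratics, and then use the finitely-many-roots property to get a strict dominance ordering near $c_0^-$; this yields eventual constancy of $\varphi$, linearity of $\bv_c$, and $|D_{c_0}|=1$ in one stroke, without the paper's face-dimension argument. The trade-off: your route needs the extra non-identity step (which the paper's oscillation argument handles automatically, since an identically zero $g$ is also contradicted by the strict inequality on the subsequence), but in exchange it is more constructive and dispenses with the separate nested-faces argument for the singleton claim. One step you should make explicit, as the paper does implicitly, is that $\varphi(c)=(i,j)$ together with $P^i_{j,c}\in S^{i\circ}_{j,c}\cap S$ forces $\bv_c=P^i_{j,c}$ (the projection onto $\widetilde{S^i_{j,c}}\cap S$ lying in the face is the unique minimizer over $S^i_{j,c}\cap S$); this is what licenses both your claim that $\bv_c$ is the smallest-norm candidate over $D_{c_0}$ and your inference that equal-norm distinct candidates exclude both labels from $\varphi(c)$.
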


\begin{proof}
Fix $c_0\in (a,b]$. We
claim that for some $(i,j)\in D_{c_0}$, there exists
 positive $\epsilon'(\leq\epsilon \mbox{ that validates Lemma 1})$ such that for any $
c\in(c_0-\epsilon',c_0)$, $\bv_c=P^i_{j,c}$. Assume that the claim is not correct, then pick any $(i,j)\in D_{c_0}$,   there exists a sequence
$\{c_k\}$ $(c_k\neq c_{k'} \text{ if } k\neq k')$ converging to
$c_0$ from the left s.t. $\bv_{c_k}\neq P^i_{j,c_k}$. Without loss of generality,
take $\{c_{k}\}\subset(c_0-\epsilon, c_0)$. Lemma $1$ implies that $P^i_{j,c_k}\in S^{i\circ}_{j,c_k}\cap S$. If $\bv_{c_k}\in S^i_{j,c_k}$,
we would have $\bv_{c_k}=
P^i_{j,c_k}$.
Hence $\bv_{c_k}\not\in S^i_{j,c_k}$. By finiteness of the index pairs in $\mathcal{F}_c$, there exists
$(i',j')\neq(i,j)$ such that $\varphi(c)=(i',j')$ for
$c\in\{c_{k_l}\}$, where $\{c_{k_l}\}$ is some subsequence of $\{c_k\}$.
This implies $(i',j')\in D_{c_0}$, which together with Lemma 1
implies $\bv_c=P^{i'}_{j',c}$ for $c\in\{c_{k_l}\}$. Therefore
$$\|P^{i'}_{j',c}-P\|_2<\|P^i_{j,c}-P\|_2$$ for $c\in\{c_{k_l}\}$.

On the other hand, because $(i,j)\in D_{c_0}$, there exist
infinitely many $c'\in(c_0-\epsilon, c_0)$ such that
$\|P^{i'}_{j',c'}-P\|_2\geq\|P^i_{j,c'}-P\|_2$.  Therefore,
$$g(c)=\|P-P^i_{j,c}\|_2^2-\|P-P^{i'}_{j',c}\|_2^2$$ changes signs
infinitely many times on $(c_0-\epsilon, c_0)$.
This leads to a contradiction because $P^{i}_{j,c}$ and
$P^{i'}_{j',c}$ are both linear functions of $c$. Hence, the
conclusion holds.

To show that $D_{c_0}$ is a singleton, suppose it has two distinct elements $(i,j)$ and $(i',j')$.  We have shown that $\bv_c=P^i_{j,c}$ and $\bv_c=P^{i'}_{j',c}$ for all $c$ in a left neighborhood of $c_0$ (not including $c_0$). Also we have $P^i_{j,c}\in S^{i\circ}_{j,c}$ and $P^{i'}_{j',c}\in S^{i'\circ}_{j',c}$ by Lemma $1$.  This can be true only when $S^{i\circ}_{j,c}\subset S^{i'\circ}_{j',c}$ (or vice versa), but then $i<i'$, contradicting with minimality in definition of $D_{c_0}$.

\end{proof}

\begin{remark}\label{remark::linear}
Similarly, $\forall c_0\in[a,b)$, $\exists \epsilon'>0$ such that $\bv_c$ is
linear in $c$ on $(c_0, c_0+\epsilon')$.
\end{remark}

\begin{lem}\label{lemma:bvc-continuous}
$\bv_c$ is a continuous function of $c$ on $[a,b]$.
\end{lem}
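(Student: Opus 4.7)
The plan relies on three ingredients: the uniqueness of $\bv_c$ (from strict convexity of $\|\bv\|_2^2$ on the convex feasible set $\Omega_c \cap S$), the existence of one-sided limits already established, and a convex-combination trick for the harder left side. Since $\bw_c = \bSigma^{-1/2} \bv_c$ is a linear, invertible image of $\bv_c$, it suffices to prove continuity of $c \mapsto \bv_c$ on $[a,b]$. By Lemma \ref{lem::piecewiselinear} and Lemma \ref{lemma:D-c0-sigleton}, for every $c_0 \in (a, b]$ the map $c \mapsto \bv_c$ is linear on some left neighborhood of $c_0$, so the left limit $\bv^- := \lim_{c \to c_0^-} \bv_c$ exists; by Remark \ref{remark::linear}, the right limit $\bv^+ := \lim_{c \to c_0^+} \bv_c$ exists for $c_0 \in [a, b)$. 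Continuity reduces to showing $\bv^{\pm} = \bv_{c_0}$ wherever these limits are defined.

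I first check that $\bv^{\pm}$ are feasible for the $c_0$-problem. By continuity of the linear map $\bv \mapsto \bv^T \bar\bmu_d$ and the $\ell_1$ norm $\bv \mapsto \|\bSigma^{-1/2} \bv\|_1$, passing to the limit in $(\bv_c)^T \bar\bmu_d = 1$ and $\|\bSigma^{-1/2} \bv_c\|_1 \leq c$ gives $(\bv^{\pm})^T \bar\bmu_d = 1$ and $\|\bSigma^{-1/2} \bv^{\pm}\|_1 \leq c_0$. Hence $\bv^{\pm} \in \Omega_{c_0} \cap S$, and by optimality of $\bv_{c_0}$ we obtain the lower bound $\|\bv_{c_0}\|_2 \leq \|\bv^{\pm}\|_2$.

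The reverse inequality for $\bv^+$ is immediate: for every $c > c_0$, $\bv_{c_0}$ itself lies in $\Omega_c \cap S$ since $\Omega_{c_0} \subset \Omega_c$, so $\|\bv_c\|_2 \leq \|\bv_{c_0}\|_2$, and passing to the limit gives $\|\bv^+\|_2 \leq \|\bv_{c_0}\|_2$. Combined with the lower bound and strict convexity of the squared norm, this forces $\bv^+ = \bv_{c_0}$.

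The main obstacle is the reverse inequality for $\bv^-$, where $\bv_{c_0}$ is \emph{not} feasible for the $c$-problem when $c < c_0$ and $\|\bSigma^{-1/2} \bv_{c_0}\|_1 = c_0$. I would handle this via an approximating sequence: fix any $\bv^{(a)} \in \Omega_a \cap S$ (non-empty by the definition of $a$), and for $t \in [0,1]$ set $\bv(t) = (1-t)\bv^{(a)} + t \bv_{c_0}$. Affinity of $\bv \mapsto \bv^T \bar\bmu_d$ gives $\bv(t)^T \bar\bmu_d = 1$, and the triangle inequality gives $\|\bSigma^{-1/2} \bv(t)\|_1 \leq (1-t)a + t c_0$. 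For $c \in (a, c_0)$, choose $t(c) = (c-a)/(c_0 - a) \in (0,1)$, so that $\bv(t(c))$ is feasible for the $c$-problem; optimality then yields $\|\bv_c\|_2 \leq \|\bv(t(c))\|_2$, and $t(c) \to 1$ as $c \to c_0^-$, hence $\|\bv(t(c))\|_2 \to \|\bv_{c_0}\|_2$. This gives $\|\bv^-\|_2 \leq \|\bv_{c_0}\|_2$, and together with the lower bound and uniqueness of the minimizer we conclude $\bv^- = \bv_{c_0}$. The degenerate endpoint $c_0 = a$ requires only right-continuity and is covered by the argument for $\bv^+$.
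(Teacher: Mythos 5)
Your proof is correct, and for the delicate half of the statement it follows a genuinely different route from the paper. Both arguments handle right-continuity the same way in essence: nestedness of the feasible sets $\Omega_c\cap S$ plus uniqueness of the Euclidean minimizer. The difference is in left-continuity. The paper stays inside the polyhedral machinery it built for Theorem~\ref{thm::piecewiselinear}: it works with the projections $P^{i_{c_0}}_{j_{c_0},c}$ onto the face containing $\bv_{c_0}$, splits into the cases $\widetilde{S^{i_{c_0}}_{j_{c_0},c}}\parallel S$ and $\nparallel S$, and uses monotonicity of the value function $h$ to force $h(c)\to h(c_0)$ before invoking uniqueness. You instead bypass the face structure entirely with the convex-interpolation device: joining a point of $\Omega_a\cap S$ to $\bv_{c_0}$ produces, via the triangle inequality, a feasible point of the $c$-problem for every $c\in(a,c_0)$ whose norm tends to $\|\bv_{c_0}\|_2$, which gives left-continuity of the value function directly and then uniqueness of the projection finishes the argument. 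Your route is shorter, avoids the parallel/non-parallel case analysis, and is more general --- it uses only convexity of the feasible sets and the relation $\Omega_c\supseteq(1-t)\Omega_a+t\,\Omega_{c_0}$ for $c=(1-t)a+tc_0$, so it would survive replacing the $\ell_1$ ball by other convex bodies scaling with $c$; indeed, with a compactness/subsequence argument in place of the one-sided limits you could even drop the appeal to Lemma~\ref{lemma:D-c0-sigleton} and Remark~\ref{remark::linear}. What the paper's geometric argument buys in exchange is that it reuses exactly the objects ($\mathcal{F}_c$, $P^i_{j,c}$, $D_{c_0}$) already needed to prove piecewise linearity, so continuity comes out of the same decomposition that identifies the linear pieces. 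Your reliance on the prior lemmas only for existence of the one-sided limits, and your separate treatment of the endpoint $c_0=a$, are both legitimate and correctly handled.
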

\begin{proof}
The continuity follows from two parts \rmnum{1}) and \rmnum{2}).

\rmnum{1})  $\forall c_0\in[a,b)$, $\exists \epsilon>0$ such that $\bv_c$ is continuous on $[c_0, c_{0}+\epsilon)$.  Indeed, let
$$h(c)=\min_{\|\bSigma^{-\frac{1}{2}}\bv\|_1\leq c, \bv^T\bar{\bmu_d}=1}\bv^T\bv.$$ We know that the mapping $c\mapsto \bv_c(=P^i_{j,c})$ is linear and hence continuous on $(c_0, c_0+\epsilon)$ for some small $\epsilon>0$. It only remains to show that the mapping is right continuous at $c_0$.  Notice here $h(c)=\|P^i_{j,c}\|_2^2$ for $c\in(c_0,c_{0}+\epsilon)$.  Let $L=\lim_{c\downarrow c_0}P^i_{j,c}$. It is clear that $L\in S^i_{j,c_0}$.  Because $L\in \Omega_{c_0}\cap S$, $h(c_0)\leq \|L\|_2^2$.  This inequality has to take the equal sign because $h(\cdot)$ is monotone decreasing, and $h(c)=\|P^i_{j,c}\|_2^2\rightarrow \|L\|_2^2$ as $c$ approaches $c_0$ from the right. Because $\bv_{c_0}$ is unique, $\bv_{c_0}=L=\lim_{c\downarrow c_0}P^i_{j,c}=\lim_{c\downarrow c_0}\bv_c$.

\rmnum{2})  $\forall c_0\in(a,b]$, $\exists \epsilon>0$ such that $\bv_c$ is continuous on $(c_0-\epsilon, c_0]$. Again, it remains to show that there is no jump at $c_0$.    Let $(i_{c_0}, j_{c_0})=\varphi(c_0)$. Clearly $P^{i_{c_0}}_{j_{c_0}, c_0}\in S^{i_{c_0}\circ}_{j_{c_0},c_0}$. Introduce a notion of parallelism of affine subspaces in $\mathbb{R}^p$.  We denote  $\widetilde{S^{i_{c_0}}_{j_{c_0},c}}\parallel S$, if only by translation, $\widetilde{S^{i_{c_0}}_{j_{c_0},c}}$ becomes a subset of $S$ (or vice versa in other situations); use the notation $\widetilde{S^{i_{c_0}}_{j_{c_0},c}}\nparallel S$ otherwise.

If $\widetilde{S^{i_{c_0}}_{j_{c_0},c}}\nparallel S$, for $c$ in some left neighborhood of $c_0$, $P^{i_{c_0}}_{j_{c_0},c}$ exists and $P^{i_{c_0}}_{j_{c_0},c}\in S^{i_{c_0}\circ}_{j_{c_0}, c}$. Note $P^{i_{c_0}}_{j_{c_0},c}\in \Omega_c\cap S$, and $\|P^{i_{c_0}}_{j_{c_0}, c}\|_2\rightarrow \|P^{i_{c_0}}_{j_{c_0}, c_0}\|_2$ as $c$ approaches $c_0$ from the left.  Since $h(\cdot)$ is monotone decreasing, obviously $h(c)\rightarrow \|P^{i_{c_0}}_{j_{c_0}, c_0}\|_2^2=h(c_0)$. This shows the left continuity of $h$ at $c_0$.   Suppose $D_{c_0}=\{(i,j)\}$, then we know on a left neighborhood of $c_0$ (not including $c_0$), $\bv_c=P^i_{j,c}$.  Let $E=\lim_{c\uparrow c_0}P^i_{j,c}$, then $E\in\Omega_{c_0}\cap S$. Note that $\|P^{i_{c_0}}_{j_{c_0}, c}\|_2\geq \|P^{i}_{j,c}\|_2$ for all $c$ in $c_0$'s left neighborhood, so we have $ \|P^{i_{c_0}}_{j_{c_0},c_0}\|_2\geq \|E\|_2$. On the other hand, $ \|P^{i_{c_0}}_{j_{c_0},c_0}\|_2\leq \|E\|_2$ by the definition of $P^{i_{c_0}}_{j_{c_0},c_0}$.  Also, consider the uniqueness of distance minimizing point in $\Omega_{c_0}\cap S$ to origin $O$, $E=P^{i_{c_0}}_{j_{c_0},c_0}$, and hence $\bv_c$ has left continuity at $c_0$.


If $\widetilde{S^{i_{c_0}}_{j_{c_0},c}}\parallel S$, $\exists Q\in \Omega_{c_0-\epsilon/2}\cap S$ such that $Q\neq P^{i_{c_0}}_{j_{c_0}, c_0}$.
When $c$ goes from $c_0-\epsilon/2$ to $c_0$, there exists a point $Q_c\in \Omega_{c}\cap S$ moving on the line segment from $Q$ to $P^{i_{c_0}}_{j_{c_0}, c_0}$.   Therefore, $h(\cdot)$ is left continuous at $c_0$. Replace $P^{i_{c_0}}_{j_{c_0},c}$ by $Q_c$ in the previous paragraph, the left continuity of $\bv_c$ at $c_0$ follows from the same argument.

\end{proof}
\vspace{2cm}
\bibliographystyle{rss}
\bibliography{bio}

\begin{thebibliography}{39}
\expandafter\ifx\csname natexlab\endcsname\relax\def\natexlab#1{#1}\fi
\expandafter\ifx\csname url\endcsname\relax
  \def\url#1{\texttt{#1}}\fi
\expandafter\ifx\csname urlprefix\endcsname\relax\def\urlprefix{URL }\fi

\bibitem[{Ackermann and Strimmer(2009)}]{Ackermann-2009}
Ackermann, M. and Strimmer, K. (2009) A general modular framework for gene set
  enrichment analysis.
\newblock \emph{BMC Bioinformatics}, \textbf{10}, 47.

\bibitem[{Antoniadis \emph{et~al.}(2003)Antoniadis, Lambert-Lacroix and
  Leblanc}]{Antoniadis-2003}
Antoniadis, A., Lambert-Lacroix, S. and Leblanc, F. (2003) {Effective dimension
  reduction methods for tumor classification using gene expression data}.
\newblock \emph{Bioinformatics}, \textbf{19}, 563--570.

\bibitem[{Bair \emph{et~al.}(2006)Bair, Hastie, Paul and
  Tibshirani}]{BairHast-2006}
Bair, E., Hastie, T., Paul, D. and Tibshirani, R. (2006) Prediction by
  supervised principal components.
\newblock \emph{J. Amer. Statist. Assoc.}, \textbf{101}, 119--137.

\bibitem[{Bickel and Levina(2004)}]{BickelLevina-04}
Bickel, P. and Levina, E. (2004) Some theory for fisher¡¯s linear discriminant
  function, ``naive bayese" and some alternatives when there are many more
  variables than observations.
\newblock \emph{Bernoulli}, \textbf{10}, 989--1010.

\bibitem[{Boulesteix(2004)}]{Boulesteix-2004}
Boulesteix, A.-L. (2004) P{LS} dimension reduction for classification with
  microarray data.
\newblock \emph{Stat. Appl. Genet. Mol. Biol.}, \textbf{3}, Art. 33, 32 pp.
  (electronic).

\bibitem[{Boyd and Vandenberghe(2004)}]{Boyd04}
Boyd, S. and Vandenberghe, L. (2004) \emph{Convex Optimization}.
\newblock Cambridge University Press.

\bibitem[{Breheny and Huang(2011)}]{BrehenyHuang2011}
Breheny, P. and Huang, J. (2011) Coordinate descent algorithms for nonconvex
  penalized regression, with applications to biological feature selection.
\newblock \emph{Annals of Applied Statistics}, \textbf{5}, 232--253.

\bibitem[{Domingos and Pazzani(1997)}]{Domingos-1997}
Domingos, P. and Pazzani, M. (1997) On the optimality of the simple bayesian
  classifier under zero-one loss.
\newblock \emph{Mach. Learn.}, \textbf{29}, 103--130.

\bibitem[{Donoho and Johnstone(1994)}]{Dono:John:idea:1994}
Donoho, D.~L. and Johnstone, I.~M. (1994) Ideal spatial adaptation by wavelet
  shrinkage.
\newblock \emph{Biometrika}, \textbf{81}, 425--455.

\bibitem[{Dudoit \emph{et~al.}(2002)Dudoit, Fridlyand and Speed}]{Dudoit-2002}
Dudoit, S., Fridlyand, J. and Speed, T.~P. (2002) Comparison of discrimination
  methods for the classification of tumors using gene expression data.
\newblock \emph{J. Amer. Statist. Assoc.}, \textbf{97}, 77--87.

\bibitem[{Efron \emph{et~al.}(2004)Efron, Hastie, Johnstone and
  Tibshirani}]{Efron-04}
Efron, B., Hastie, T., Johnstone, I. and Tibshirani, R. (2004) Least angle
  regression.
\newblock \emph{Ann. Statist.}, \textbf{32}, 407--499.

\bibitem[{Fan and Fan(2008)}]{FanFan-08}
Fan, J. and Fan, Y. (2008) High dimensional classification using features
  annealed independence rules.
\newblock \emph{Ann. Statist.}, \textbf{36}, 2605--2637.

\bibitem[{Fan and Li(2001)}]{FanLi2001}
Fan, J. and Li, R. (2001) Variable selection via nonconcave penalized
  likelihood and its oracle properties.
\newblock \emph{J. Amer. Statist. Assoc.}, \textbf{96}, 1348--1600.

\bibitem[{Fan and Lv(2008)}]{FanLv-08}
Fan, J. and Lv, J. (2008) Sure independence screening for ultra-high
  dimensional feature space(with discussion).
\newblock \emph{J. R. Statist. Soc. B}, \textbf{70}, 849--911.

\bibitem[{Fan and Lv(2010)}]{FanLv2010}
Fan, J. and Lv, J. (2010) A selective overview of variable selection in high
  dimensional feature space.
\newblock \emph{Statistica Sinica}, \textbf{20}, 101--148.

\bibitem[{Golub \emph{et~al.}(1999)Golub, Slonim, Tamayo, Huard, Gaasenbeek,
  Mesirov, Coller, Loh, Downing, Caligiuri, Bloomfield and Lander}]{Golub-1999}
Golub, T.~R., Slonim, D.~K., Tamayo, P., Huard, C., Gaasenbeek, M., Mesirov,
  J.~P., Coller, H., Loh, M.~L., Downing, J.~R., Caligiuri, M.~A., Bloomfield,
  C.~D. and Lander, E.~S. (1999) {Molecular Classification of Cancer: Class
  Discovery and Class Prediction by Gene Expression Monitoring}.
\newblock \emph{Science}, \textbf{286}, 531--537.

\bibitem[{Gordon \emph{et~al.}(2002)Gordon, Jensen, Hsiao, Gullans,
  Blumenstock, Ramaswamy, Richards, Sugarbaker and Bueno}]{Gordon-2002}
Gordon, G.~J., Jensen, R.~V., Hsiao, L.-L., Gullans, S.~R., Blumenstock, J.~E.,
  Ramaswamy, S., Richards, W.~G., Sugarbaker, D.~J. and Bueno, R. (2002)
  Translation of microarray data into clinically relevant cancer diagnostic
  tests using gene expression ratios in lung cancer and mesothelioma.
\newblock \emph{Cancer Research}, \textbf{62}, 4963--4967.

\bibitem[{Guo \emph{et~al.}(2005)Guo, Hastie and Tibshirani}]{Guo-2005}
Guo, Y., Hastie, T. and Tibshirani, R. (2005) Regularized discriminant analysis
  and its application in microarrays.
\newblock \emph{Biostatistics}, \textbf{1}, 1--18.

\bibitem[{Hastie \emph{et~al.}(2009)Hastie, Tibshirani and Friedman}]{HTF09}
Hastie, T., Tibshirani, R. and Friedman, J.~H. (2009) \emph{The Elements of
  Statistical Learning: Data Mining, Inference, and Prediction (2nd edition)}.
\newblock Springer-Verlag Inc.

\bibitem[{Huang(2003)}]{HuangPan-2003}
Huang, X., P.~W. (2003) Linear regression and two-class classification with
  gene expression data.
\newblock \emph{Bioinformatics}, \textbf{19}, 2072--2978.

\bibitem[{Krzanowski \emph{et~al.}(1995)Krzanowski, Jonathan, McCarthy and
  Thomas}]{Krzanowski-1995}
Krzanowski, W., Jonathan, P., McCarthy, W. and Thomas, M. (1995) Discriminat
  analysis with singular covariance matrices: methods and applications to
  spectroscopic data.
\newblock \emph{Applied Statistics}, \textbf{44}, 101–115.

\bibitem[{Lewis(1998)}]{Lewis98}
Lewis, D.~D. (1998) Naive (bayes) at forty: The independence assumption in
  information retrieval.
\newblock  4--15. Springer Verlag.

\bibitem[{Li(1991)}]{Li-1991-sliced-inverse-regression}
Li, K.-C. (1991) Sliced inverse regression for dimension reduction.
\newblock \emph{J. Amer. Statist. Assoc.}, \textbf{86}, 316--342.
\newblock With discussion and a rejoinder by the author.

\bibitem[{Nguyen and Rocke(2002)}]{Nguyen-2002}
Nguyen, D.~V. and Rocke, D.~M. (2002) {Tumor classification by partial least
  squares using microarray gene expression data }.
\newblock \emph{Bioinformatics}, \textbf{18}, 39--50.

\bibitem[{Oberthuer \emph{et~al.}(2006)Oberthuer, Berthold, Warnat, Hero,
  Kahlert, Spitz, Ernestus, K\"{o}nig, Haas, Eils, Schwab, Brors, Westermann
  and Fischer}]{Oberthuer06}
Oberthuer, A., Berthold, F., Warnat, P., Hero, B., Kahlert, Y., Spitz, R.,
  Ernestus, K., K\"{o}nig, R., Haas, S., Eils, R., Schwab, M., Brors, B.,
  Westermann, F. and Fischer, M. (2006) Customized oligonucleotide microarray
  gene expression based classification of neuroblastoma patients outperforms
  current clinical risk stratification.
\newblock \emph{Journal of Clinical Oncology}, \textbf{24}, 5070--5078.

\bibitem[{Rosset and Zhu(2007)}]{Rosset&Zhu07}
Rosset, S. and Zhu, J. (2007) Piecewise linear regularized solution paths.
\newblock \emph{Ann. Statist.}, \textbf{35}, 1012--1030.

\bibitem[{Ruszczynski(2006)}]{Ruszczynski06}
Ruszczynski, A. (2006) \emph{Nonlinear Optimization}.
\newblock Princeton University Press.

\bibitem[{Shao \emph{et~al.}(2011)Shao, Wang, Deng and Wang}]{ShaoWang11}
Shao, J., Wang, Y., Deng, X. and Wang, S. (2011) Sparse linear discriminant
  analysis by thresholding for high dimensional data.
\newblock \emph{Ann. Statist.}, \textbf{39}, to appear.

\bibitem[{Tibshirani(1996)}]{Tibs:regr:1996}
Tibshirani, R. (1996) Regression shrinkage and selection via the lasso.
\newblock \emph{J. R. Statist. Soc. B}, \textbf{58}, 267--288.

\bibitem[{Tibshirani \emph{et~al.}(2002)Tibshirani, Hastie, Narasimhan and
  Chu}]{Tibshirani-02}
Tibshirani, R., Hastie, T., Narasimhan, B. and Chu, G. (2002) Diagnosis of
  multiple cancer types by shrunken centroids of gene expression.
\newblock \emph{Proc. Natl. Acad. Sci.}, \textbf{99}, 6567--6572.

\bibitem[{Tseng(2001)}]{Tseng01}
Tseng, P. (2001) Convergence of a block coordinate descent method for
  nondifferentiable minimization.
\newblock \emph{J. Optim. Theory Appl.}, \textbf{109}, 475--494.

\bibitem[{Vapnik(1995)}]{Vapnik-1995}
Vapnik, V.~N. (1995) \emph{The nature of statistical learning theory}.
\newblock New York: Springer-Verlag.

\bibitem[{Wang and Zhu(2007)}]{Wang&Zhu:nsclasso:2007}
Wang, S. and Zhu, J. (2007) Improved centroids estimation for the nearest
  shrunken centroid classifier.
\newblock \emph{Bioinformatics}, \textbf{23}, 972--979.

\bibitem[{Zhang(2010)}]{Zhang09}
Zhang, C.-H. (2010) Nearly unbiased variable selection under minimax concave
  penalty.
\newblock \emph{Ann. Statist.}, \textbf{38}, 894--942.

\bibitem[{Zhao and Li(2010)}]{ZhaoLi10}
Zhao, D.~S. and Li, Y. (2010) Principled sure independence screening for cox
  models with ultra-high-dimensional covariates.
\newblock Manuscript.

\bibitem[{Zou(2006)}]{Zou06}
Zou, H. (2006) The adaptive lasso and its oracle properties.
\newblock \emph{J. Amer. Statist. Assoc.}, \textbf{101}, 1418--1429.

\bibitem[{Zou and Hastie(2005)}]{ZouHastie05}
Zou, H. and Hastie, T. (2005) Regularization and variable selection via the
  elastic net.
\newblock \emph{J. R. Statist. Soc. B}, \textbf{67}, 768--768.

\bibitem[{Zou \emph{et~al.}(2006)Zou, Hastie and
  Tibshirani}]{ZouHastieTibs-2006}
Zou, H., Hastie, T. and Tibshirani, R. (2006) Sparse principal component
  analysis.
\newblock \emph{J. Comput. Graph. Statist.}, \textbf{15}, 265--286.

\bibitem[{Zou and Li(2008)}]{ZouLi08}
Zou, H. and Li, R. (2008) One-step sparse estimates in nonconcave penalized
  likelihood models.
\newblock \emph{Ann. Statist.}, \textbf{36}, 1509--1533.

\end{thebibliography}

\end{document}